\newcommandx{\unsure}[2][1=]{\todo[linecolor=red,backgroundcolor=red!25,bordercolor=red,#1]{#2}}
\newcommandx{\change}[2][1=]{\todo[linecolor=blue,backgroundcolor=blue!25,bordercolor=blue,#1]{#2}}
\newcommandx{\info}[2][1=]{\todo[linecolor=OliveGreen,backgroundcolor=OliveGreen!25,bordercolor=OliveGreen,#1]{#2}}
\newcommandx{\improvement}[2][1=]{\todo[linecolor=Plum,backgroundcolor=Plum!25,bordercolor=Plum,#1]{#2}}
\newcommandx{\thiswillnotshow}[2][1=]{\todo[disable,#1]{#2}}
\newtheorem{theorem}{Theorem}[section]
\newtheorem{lemma}[theorem]{Lemma}
\newtheorem{definition}[theorem]{Definition}
\newtheorem{remark}[theorem]{Remark}
\newtheorem{example}[theorem]{Example}
\newtheorem*{theorem*}{Theorem}
\newtheorem*{problem*}{Problem}
\DeclareMathOperator{\A}{\mathcal{A}\xspace}
\newcommand{\p}{p}
\newcommand{\aggr}{q}
\DeclareMathOperator{\states}{\mathcal{S}}
\DeclareMathOperator{\x}{\mathbf{x}\xspace}
\DeclareMathOperator{\y}{\mathbf{y}\xspace}
\DeclareMathOperator{\w}{\mathbf{w}\xspace}
\DeclareMathOperator{\opt}{\mathrm{OPT}\xspace}
\DeclareMathOperator{\lp}{\mathrm{LP}\xspace}
\DeclareMathOperator{\mlsd}{\mathrm{MLSD}\xspace}
\newcommand\event[1]{\mathop{\mathcal{X}\left(#1\right)}}
\newcommand\Ex[2]{\mathop{\underset{#1}{\mathbb{E}}\left[#2\right]}}
\newcommand\Prob[2]{\mathop{\underset{#1}{\mathbb{P}}\left(#2\right)}}
\crefname{algocfline}{Algorithm}{Algorithms}
\Crefname{algocfline}{Algorithm}{Algorithms}
\title{Last Switch Dependent Bandits with Monotone Payoff Functions}
\author{Ayoub Foussoul\\ 
Columbia University\\ 
\texttt{af3209@columbia.edu} 
\and 
Vineet Goyal\\ 
Columbia University\\ 
\texttt{vg2277@columbia.edu} \and Orestis Papadigenopoulos\\ 
Columbia University\\ 
\texttt{opapadig@columbia.edu}
\and 
Assaf Zeevi\\ 
Columbia University\\ 
\texttt{assaf@gsb.columbia.edu}
}
\date{}
\begin{document}

\maketitle

\begin{abstract}
In a recent work, Laforgue et al. introduce the model of {\em last switch dependent} (LSD) bandits, in an attempt to capture nonstationary phenomena induced by the interaction between the player and the environment. Examples include {\em satiation}, where consecutive plays of the same action lead to decreased performance, or {\em deprivation}, where the payoff of an action increases after an interval of inactivity. In this work, we take a step towards understanding the approximability of {\em planning} LSD bandits, namely, the (NP-hard) problem of computing an optimal arm-pulling strategy under complete knowledge of the model. In particular, we design the first efficient constant approximation algorithm for the problem and show that, under a natural monotonicity assumption on the payoffs, its approximation guarantee (almost) matches the state-of-the-art for the special and well-studied class of {\em recharging} bandits (also known as {\em delay-dependent}). In this attempt, we develop new tools and insights for this class of problems, including a novel {\em higher-dimensional relaxation} and the technique of {\em mirroring} the evolution of {\em virtual states}. We believe that these novel elements could potentially be used for approaching richer classes of action-induced nonstationary bandits (e.g., special instances of restless bandits). In the case where the model parameters are initially unknown, we develop an online learning adaptation of our algorithm for which we provide sublinear {\em regret} guarantees against its full-information counterpart.
\end{abstract}

\section{Introduction}
Shortly after the introduction of the (stochastic) {\em multi-armed bandits} (MAB) framework \cite{BB12,LS18}, practitioners and researchers quickly raised the issue of {\em nonstationarity}, thus questioning the restrictive assumption of the original model \cite{T33,LR85} that the environment (meaning the payoff distributions of the actions) remains intact. This opened the way to the development of important and well-studied extensions of the model, including {\em adversarial bandits} \cite{PCBFS02}, {\em reinforcement learning} \cite{jaksch2010near,szepesvariRL}, and more \cite{Keskin,Besbes,KNMS10,auer2019adaptively}. In many situations, a potential shift in the environment is not solely the result of external factors, rather than a natural consequence of its interaction with previously made decisions (see \cite{Whittle88} for examples). In an attempt to address this issue of {\em action-induced} nonstationarity various models have been proposed with {\em restless} \cite{Whittle88, TL12, GMS10} and {\em rested} \cite{Gittins79, TL12} bandits being the most prominent. In these settings, every arm is associated with a state-machine and its mean payoff depends on the current state. The state of each arm can change (potentially stochastically) at every round (in the restless case) or only after the arm is pulled (in rested case). Even ignoring the learning aspect and assuming complete knowledge of the underlying arm-state distributions, any attempt to compute a (near-)optimal {\em planning} policy -- usually via solving Bellman’s equations \cite{Bertsekas}) -- requires an exponentially large space in the number of actions and hits the wall of strong inapproximability results \cite{PT99}.

More recently, researchers have shifted their attention to special cases of restless bandits, which are simple enough to accept efficient (near-)optimal planning algorithms, yet expressive enough to capture fundamental action-induced nonstationary phenomena \cite{Levine, rising}. Immorlica and Kleinberg \cite{KI18} first attempt to model the effect of {\em deprivation} in online decision-making by introducing the model of {\em recharging} (a.k.a., {\em delay-dependent}) bandits. Here, the (mean) payoff of each action depends -- in an increasing fashion -- on the time elapsed since the action was played for the last time (often called {\em ``delay''}). Soon afterwards, a number of works focused on generalizations \cite{SLZZ21,papadigenopoulos2022nonstationary}, variations \cite{CCB19,PBG19}, and special cases \cite{BSSS19,PC21} of the model. Due to the computational hardness of the underlying planning problem, these works have a dual purpose: to construct an efficient near-optimal planning algorithm and, subsequently, to adapt it into an online learning policy for the case where the payoff distributions are unknown. 

\paragraph{Last switch dependent bandits.} In an attempt to capture a richer class of action-induced nonstationary phenomena, Laforgue, Clerici, Cesa-Bianchi, and Gilad-Bachrach \cite{LCCBGB21} recently introduced the model of {\em last switch dependent} (LSD) bandits. In their setting, the notion of ``delay'' -- of central role in recharging bandits -- is replaced by that of a ``{\em switch}'': a change in the course of action from the part of the decision-maker (see below). This shift of perspective not only strictly subsumes the recharging bandits model, but also captures additional natural behaviors, including that of {\em satiation}: the (gradual) degradation in performance due to the repeated use of resources. 

In this work, we study a variation of the LSD model, which we (informally) describe below:

\begin{problem*}[Last Switch Dependent Bandits with Monotone Payoffs ($k$-MLSD)] 
We consider a setting where the payoff of each arm is a function of its {\em state} at any given round. The state of an arm can be any (positive or negative) integer and changes, at the end of each round, according to the following rules: when an arm is at a positive state $\tau > 0$ and is not played at the current round, its state ``increases'' to $\tau+1$, while if it is played, its state transitions to $-1$. Dually, if an arm is at a negative state $\tau<0$, and is played at the current round, its state ``decreases'' to $\tau-1$, while if it is not played it transitions to $+1$. The payoff of each arm is a monotone non-decreasing function over the space of integer states. At each round, the decision-maker selects at most $k$ of the available arms and collects the sum of the associated payoffs (evaluated at the corresponding states). The objective is to maximize the total collected payoff within a (potentially unknown) time horizon.
\end{problem*}

The fundamental difference between the above model compared to its original formulation \cite{LCCBGB21} is the {\em monotonicity}. Specifically, we assume that the payoff function of each arm is monotone non-decreasing over the whole set of integer states, while in \cite{LCCBGB21} this assumption is only made for its negative part. Although the assumption excludes from the model any possible seasonal behaviors, our setting still widens the class of action-induced phenomena that can be captured (e.g., satiation), and still generalizes many existing works -- either strictly \cite{SLZZ21,papadigenopoulos2022nonstationary, BSSS19, PC21,KI18} or conceptually \cite{mintz2020nonstationary}. In addition, monotonicity is a plausible assumption in many situations where the positive effect of an action after a period of deprivation is higher than while in satiation (an everyday example is food consumption). Finally, we believe that monotonicity changes dramatically the approximability status of the problem; in fact, we conjecture that without this assumption the problem does not accept any polynomial-time constant approximations under standard complexity assumptions (yet proving it falls beyond the scope of this work). The validity of such a statement would justify the fact the algorithm developed in \cite{LCCBGB21} is not efficient and the provided guarantees involve additive losses.

\paragraph{Summary of contributions.} In this work, we provide the first polynomial-time $\mathcal{O}(1)$-approximation algorithm for the problem of planning LSD bandits with monotone payoff functions. Interestingly, the approximation guarantee of our algorithm matches (up to an arbitrarily small error) the state-of-the-art for the special case of recharging bandits \cite{papadigenopoulos2022nonstationary}. An immediate practical implication of our work is that one can replace the recharging model with the (strictly more expressive) $k$-$\mlsd$ one without any sacrifice in the approximability (compared to the state-of-the-art until this work). Moreover, compared to \cite{LCCBGB21}, our algorithm can also handle the case where more than one arms can be played at each round with gradually improved provable performance, and does not dependent on the time horizon. Finally, we complement our results by developing an online bandit adaptation of our algorithm and proving that the latter achieves (efficiently) sublinear regret in the regime where the payoff functions are initially unknown.

We address the reader to \Cref{apx:bibl} for a more technical discussion on the related work. 

\subsection{Technical Challenges and Roadmap}
The main contribution of this work is an efficient LP-based approximation algorithm for the problem of planning $k$-$\mlsd$ bandits. In particular, our algorithm collects (asymptotically and in expectation) a $(1-\epsilon)\left(1- \nicefrac{k^k}{e^k k!}\right)$-fraction\footnote{Using Stirling's formula one can show that, for large $k$, the long-run approximation guarantee of our algorithm behaves roughly as $(1-\epsilon) \left(1 - \frac{1}{\sqrt{2 \pi k}} \right)$.} of the optimal expected payoff in time $\text{poly}(n, \tau^{\max}, \nicefrac{1}{\epsilon})$, where $n$ is the number of arms and $\tau^{\max}$ the maximum saturation time of any payoff function (formally defined in \Cref{sec:preliminaries}). The fact that the above guarantee (almost) matches the state-of-the-art for the case of recharging payoffs stems from our attempt to generalize the best-known framework for the latter \cite{papadigenopoulos2022nonstationary}. Extending the existing framework to the case of $k$-$\mlsd$ requires 
several novel technical elements and insights, which we outline below. 

\paragraph{Continuous relaxation based on recurrent intervals.} A safe takeaway from the existing literature on recharging bandits \cite{KI18, SLZZ21, papadigenopoulos2022nonstationary} is the use of continuous relaxations -- a technique which seemingly facilitates the design (and/or analysis) of near-optimal approximation algorithms for this kind of dynamic planning problems. Another common element in these works is that, due to computational and practical reasons (e.g., large or unknown time horizon, respectively), such a relaxation should have a {\em fluidized} form; that is, one which (approximately) recovers the ``optimal'' frequencies for playing each individual arm under a given delay, rather than time-accurate playing sequences. After obtaining such information as a starting point, the role of an algorithm becomes to construct a feasible playing schedule which closely approximates these frequencies. However, as opposed to the special case of recharging bandits, the (two-sided) state transitions in $k$-$\mlsd$ present an asymmetry: after a nonplay-play switch an arm transitions to a different state compared to that of a play-nonplay one. This fact seems to preclude the design of tight continuous relaxations based on the frequencies of playing each arm under a given state, as in \cite{KI18, SLZZ21, papadigenopoulos2022nonstationary}, thus, posing a significant technical hurdle. 

In \Cref{sec:relaxation}, we develop a continuous LP-based relaxation of slightly increased dimension, based on the novel notion of {\em recurrent intervals}.

\paragraph{Online rounding via mirroring virtual evolutions.} 
Starting from an optimal solution to the relaxation, the high-level goal becomes to ``round'' it into a feasible arm-pulling schedule, where each arm is played at a pattern ``close'' to the one indicated by the corresponding variables of the LP. A critical issue that emerges from any such rounding is that of {\em collisions}: situations where trying to mimic the ``optimal'' pattern of all arms requires playing more than $k$ arms in some rounds. For the case of recharging bandits, a way to minimize the effect of collisions among the arms is the technique of {\em interleaved scheduling}, presented in \cite{papadigenopoulos2022nonstationary}. In that simpler setting, after sampling a unique delay for each arm from the corresponding LP relaxation, the arm is allowed to be played only in rounds which are integer multiples of this delay. The effect of collisions is controlled by adding a uniformly random offset to each of the above subsequences, aiming to avoid adversarial worst-case scenarios. Combined with other elements (e.g., the {\em correlation gap} of uniform matroid rank function), this ensures that the expected average payoff collected consists a constant fraction of the relaxation (and, hence, the optimal). In the $k$-$\mlsd$ setting, however, the recurrent intervals involve richer arm-playing patterns compared to a unique periodic play of \cite{papadigenopoulos2022nonstationary} -- a fact which complicates the implementation of the above technique. 

In \Cref{sec:planning}, we design a planning algorithm for the $k$-$\mlsd$ problem and provide an analysis of its approximation guarantee. To achieve this, we generalize the randomized rounding procedure in \cite{papadigenopoulos2022nonstationary} and introduce the technique of {\em mirroring virtual evolutions}; the latter allows us to overcome the issue of collisions via abstracting the interleaved scheduling technique and extending it to the $k$-$\mlsd$ setting.

\paragraph{Sample complexity and online learning.} In \Cref{sec:learning}, we present an adaptation of our algorithm for the case where the payoff functions are unknown and the decision-maker receives noisy semi-bandit feedback on the selected arms. After observing that our planning algorithm constructs feasible schedules starting from any (possibly suboptimal) solution to our LP, we start by providing sample complexity results for approximating the latter. Using these we construct a bandit adaptation of our algorithm, based on a combination of Explore-then-Commit and the doubling trick, for which we provide sublinear regret guarantees. 

All the omitted proofs of our results have been moved to the Appendix.

\section{Problem Definition and Notation}
\label{sec:preliminaries}

We consider a set $\A = [n]$ of {\em arms} (or {\em actions}) and an unknown time horizon of $T$ rounds. Each arm $i \in [n]$ is associated with a (mean) {\em payoff function} $\p_i(\cdot): \states \rightarrow [0,1]$ defined over a set of {\em states}, where $\p_i(\tau)$ denotes the mean payoff of arm $i$ when played under state $\tau$. The set of states is the same for all arms and coincides with that of all integers (positive and negative) excluding $0$, namely, $\states = \mathbb{Z} \setminus \{0\}$. At the beginning of each round $t$, the state of each arm $i$, denoted by $\tau_i(t) \in \states$, is a function of the state and the action taken in the previous round (see below for more details on state transitioning). At each round, a decision-maker can play any subset of at most $k < n$ arms and collect the sum of associated payoffs, each drawn from a distribution of mean given by its payoff function, evaluated at its current state. The planning objective is to maximize the cumulative expected payoff collected in $T$ rounds.

\paragraph{Payoff functions.} For every arm $i \in \A$, we assume that its payoff function is: (a) monotone {\em non-decreasing}, i.e., for every two states $\tau_1, \tau_2 \in \states$ with $\tau_1 < \tau_2$, it holds that $p_i(\tau_1) \leq p_i(\tau_2)$. Further, (b) we assume that the payoff function satisfies the {\em finite saturation} property, namely, there exist known integers $\tau^{\min}_i < 0 < \tau^{\max}_i$ such that $p_i(\tau) = p_i(\tau^{\max}_i)$ for every $\tau > \tau^{\max}_i$ and $p_i(\tau) = p_i(\tau^{\min}_i)$ for every $\tau < \tau^{\min}_i$. For simplicity of exposition, we assume without loss of generality that all arms have the same (upper and lower) saturation times, given by $\tau^{\max} = \max_{i \in \A} \tau^{\max}_i$ and $\tau^{\min} = \min_{i \in \A} \tau^{\min}_i$. We remark that payoff functions are given {\em explicitly} as part of the problem input and, hence, the running time of efficient algorithms can be polynomial in $\tau^{\max}$ and $|\tau^{\min}|$.

\paragraph{State transitions.}
For every arm $i \in \A$, the state at the beginning of round $t$ is given by the following rule: $\tau_i(1) = 1$ and $\tau_i(t+1) = \delta_i(\tau_i(t), A_t)$, where $A_t \subseteq \A$ is the subset of arms played at round $t$ and $\delta_i(\cdot, \cdot)$ is the {\em state transition} function, defined as
\begin{align}
    \delta_i(\tau, S) = \left\{
	\begin{array}{ll}
		\tau - 1 & \mbox{if } i \in S \text{ and } \tau < 0 \\
		 1 & \mbox{if } i \notin S \text{ and } \tau < 0 \\
		 -1 & \mbox{if } i \in S \text{ and } \tau > 0 \\ \label{eq:statestransition}
		\tau + 1 & \mbox{if } i \notin S \text{ and } \tau > 0
	\end{array}.
\right.
\end{align}
Intuitively, the state of each arm $i$ denotes the time passed since the arm last took place in a switch of actions. In particular, a positive state $\tau > 0$ denotes that arm $i$ has not been played for the last $\tau$ rounds. Thus, as time progresses and $i$ is not played, its state increases by $1$ at each round until the first time where $i$ is played again, at which point its state transitions to $-1$. Similarly, a negative state $\tau < 0$ denotes that arm $i$ has been played for $-\tau$ consecutive time steps. If an arm is played at some state $\tau < 0$ its state decreases by $1$ while, if it is not played, it transitions to $+1$.

We remark that the choice of initial state of $1$ for every arm is made for mathematical convenience and does not affect our results qualitatively.

\paragraph{Technical notation.} For any integer $q$, we use the notation $[q] = \{1,2,\ldots,q\}$. For any vector $\x \in \mathbb{R}^n$ and set $S \subseteq [n]$, we denote $\x(S) = \sum_{i \in S} x_i$. For any $\x \in [0,1]^n$, let $\mathcal{D}(\x)$ denote some distribution over $2^{[n]}$ whose marginal probabilities are given by $x_i = \Prob{S \sim \mathcal{D}(\x)}{i \in S}$ for all $i \in [n]$. Similarly, let $\mathcal{I}(\x)$ denote the element-wise independent distribution over $2^{[n]}$ with marginals $\x=(x_i)_i$, that is, the distribution such that sampling $S \sim \mathcal{I}(\x)$ consists of adding each element $i \in [n]$ to $S$ independently with probability $x_i$. For any $l, u \in \mathbb{Z}$ such that $u \geq l$ we denote by $\states^{u}_{l} = \states \cap [l, u]$ the subset of states ranging from $l$ to $u$, while we define $\states^{u}_{l} = \emptyset$ in the case where $l > u$.

\section{LP Relaxation Based on Recurrent Intervals} \label{sec:relaxation}

As we have already discussed, the idea of developing a relaxation based on the fraction of time an arm is played under a specific state (as in \cite{KI18,SLZZ21,papadigenopoulos2022nonstationary}) does not quite work for the $k$-$\mlsd$ setting (see \Cref{apx:badrelaxations} for a discussion). Instead, we manage to construct a continuous relaxation of the planning problem, based on the novel notion of a {\em recurrent interval}: a minimal sequence of states (and admissible actions) which start and end (by transitioning) to the same state. 
We begin this section by formally defining a {recurrent interval} and showing that -- for the particular case $k$-$\mlsd$ -- this structure exhibits two useful properties: (a) any optimal solution can be (almost) described as a concatenation of valid recurrent intervals for each arm and (b) every such interval has a succinct representation. Subsequently, by leveraging the above properties, we construct an LP-based (approximate and asymptotic) relaxation of slightly increased dimension which serves as the starting point of our planning algorithm.

\subsection{Recurrent Intervals and Aggregated Payoffs} 

The construction of a tight relaxation for $k$-$\mlsd$ requires a change of perspective: instead of measuring the fraction of time an arm is played under a specific state, we rather consider the fraction of time the arm spends for completing a specific (cyclic) pattern in the trajectory of its states. In order to formalize the above idea, we introduce the notion of {\em recurrent intervals}, which plays a central role in the design of our relaxation and algorithm:

\begin{definition}[Recurrent Intervals]
For any arm and given states $u \in \mathbb{Z}_+$ and $l \in \mathbb{Z}_-$, a {\em recurrent interval}, denoted by $I(u,l)$, is a sequence of distinct states (and associated actions) during which the arm starts from state $+1$ and moves back to the same state after a number of rounds. In particular, starting from state $+1$, the arm is not played until it reaches state $u>0$, where it is played for the first time (thus, transitioning to state $-1$). Then, the arm is consecutively played for $|l+1|$ rounds (including state $l+1$). The recurrent interval is terminated by not playing the arm at state $l$ (thus, transitioning back to $+1$). 
\end{definition}

The above definition allows us to study the problem from a perspective of cyclic sub-sequences instead of individual actions. In the following definition, we summarize several characteristics of any recurrent interval that are useful for the description and design of our algorithm:

\begin{definition}\label{def:trajectory} The {\em characteristic trajectory} of a recurrent interval $I = I(u,l)$ is a function $\beta_{I} : \states^u_{l} \rightarrow \{\bullet, \bot	\}$, such that $\beta_I(\tau) = \bullet$ for every $\tau \in \states^{-1}_{l+1} \cup \{u\}$ and $\beta_I(\tau) = \bot$ for every $\tau \in \states^{u-1}_{1} \cup \{l\},$ where $\bullet$ and $\bot$ represent the play and non-play of the arm, respectively. The {\em transition function} of a recurrent interval $I = I(u,l)$ is a function $\delta_I:\states^u_{l} \rightarrow \states^u_{l}$, such that
$$
\delta_{I}(\tau) = \left\{
	\begin{array}{ll}
		\tau - 1 & \mbox{if } \beta_{I}(\tau) = \bullet ~\text{ and } \tau < 0 \\
		 1 & \mbox{if } \beta_{I}(\tau) = \bot \text{ and } \tau < 0 \\
		 -1 & \mbox{if } \beta_{I}(\tau) = \bullet ~\text{ and } \tau > 0 \\ 
		\tau + 1 & \mbox{if } \beta_{I}(\tau) = \bot \text{ and } \tau > 0
	\end{array}.
\right.
$$
Finally, the {\em length} of a recurrent interval $I(u,l)$, namely, the number of consecutive rounds it occupies (including those where the arm is not played), is given by 
$\|I\|_{ri} = u - l.$ Notice that an arm is actually pulled $-l$ times during $I(u,l)$.
\end{definition}

The characteristic trajectory of a recurrent interval $I=I(u,l)$ satisfies the following property: if one starts an arm from any state in the interval, namely $\tau \in \states \cap [l,u]$, then by repeatedly playing the arm if and only if $\beta_I(\tau) = \bullet$, the trajectory of its states will follow the periodic pattern indicated by $I$. The transition function of $I$, on the other hand, gives the state to which an arm must transition from a given state $\tau$, when it follows the actions indicated by its characteristic trajectory. Note that every recurrent interval can be mapped to a distinct characteristic trajectory (resp., transition function) and the opposite. Hence, \Cref{def:trajectory} can also serve as an alternative and equivalent definition of recurrent intervals. 

Another important characteristic of a recurrent interval relative to a specific arm is the total expected payoff of the involved states.

\begin{definition}[Aggregated Payoff] The {\em aggregated} (expected) {\em payoff} of a recurrent interval $I(u,l)$ for an arm $i \in \A$ is given by the function $\aggr_i(u, l) = p_i(u) + \sum^{-1}_{\tau = l+1} p_i(\tau).$
\end{definition}

From the perspective of each single arm any feasible arm-pulling schedule can be decomposed into a sequence of (potentially varying) recurrent intervals. The only exceptions to the above rule are limiting scenarios where an arm is either never or constantly played until time horizon ends (thus, the last recurrent interval is interrupted). 

The following lemma evaluates the loss from restricting the set of recurrent intervals of each arm to only $I(u,l)$ with $l \geq \tau^L$ for some $\tau^L \leq -1$:

\begin{restatable}{lemma}{restatenearopt} \label{lem:nearopt}
For any instance of $k$-$\mlsd$ and any $\tau^L \leq -1$, there exists a (deterministic) near-optimal solution, where the sequence of plays and non-plays of every arm consists of a concatenation of recurrent intervals of the form $I(u,l)$ with $l \geq \tau^L$, potentially followed by a sequence of non-plays until the end of the time horizon. The total payoff collected by this solution, denoted by $\overline{\opt}(T)$, satisfies
$$
\overline{\opt}(T) \geq \left(1 - \frac{1}{1-\tau^L}\right) \cdot \opt(T) - n,
$$
where $\opt(T)$ is the optimal payoff in $T$ rounds.
\end{restatable}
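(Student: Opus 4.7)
The plan is to construct the desired near-optimal solution by starting from an optimal schedule (with total payoff $\opt(T)$) and applying a per-arm randomized deletion of plays that caps every consecutive play-run at $|\tau^L|$ rounds while losing only a small fraction of payoff in expectation. For each arm $i$ independently, I would sample an offset $s_i$ uniformly from $\{0,1,\ldots,-\tau^L\}$ (a set of $1-\tau^L$ values) and define a modified schedule $\alg$ by deleting the $j$-th play of arm $i$ in the optimal schedule (in chronological order) whenever $j \equiv s_i \pmod{1-\tau^L}$; at every other round, $\alg$ plays arm $i$ if and only if the optimal schedule does. Since the pulls of $\alg$ are a subset of those of the optimal schedule, feasibility ($|A^{\alg}_t|\le k$) is automatically preserved.

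Within any maximal run of arm $i$ in the optimal schedule, the deleted positions form an arithmetic progression of common difference $1-\tau^L$, so each segment between consecutive deletions (as well as the initial and final segments of the run) has length at most $1-\tau^L-1=|\tau^L|$. Because $\alg$ plays arm $i$ only at rounds where the optimal schedule does, each maximal run of arm $i$ in $\alg$ is a contiguous sub-block of a run in the optimal schedule, and therefore has length at most $|\tau^L|$. It follows that every complete recurrent interval induced by $\alg$'s schedule has the form $I(u,l)$ with $l \geq \tau^L$, as required.

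The heart of the payoff analysis is a state-coupling claim: for every arm $i$ and round $t$, $\tau^{\alg}_i(t) \geq \tau^{\opt}_i(t)$, where $\tau^{\opt}_i(t)$ and $\tau^{\alg}_i(t)$ denote the state of arm $i$ at round $t$ under the optimal and modified schedules, respectively. This is shown by induction on $t$, via a case split on whether arm $i$ is played in $\alg$, in the optimal schedule, in both, or in neither (using that any round where $\alg$ plays $i$ is also a round where the optimal schedule does), together with the transition rule~\eqref{eq:statestransition}. Combined with the monotonicity of $\p_i$, this yields $\p_i(\tau^{\alg}_i(t)) \geq \p_i(\tau^{\opt}_i(t))$, and hence
\begin{equation*}
\opt(T) - \alg(T) \;\leq\; \sum_{i\in\A}\,\sum_{t:\, i\in A^{\opt}_t \setminus A^{\alg}_t} \p_i(\tau^{\opt}_i(t)),
\end{equation*}
where $\alg(T)$ denotes the total payoff collected by $\alg$. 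Since each individual play of the optimal schedule is deleted with probability exactly $1/(1-\tau^L)$ over the random offsets, taking expectations gives $\E{\alg(T)} \geq \left(1-\tfrac{1}{1-\tau^L}\right)\opt(T)$.

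To match the structural form prescribed by the lemma, one final patch is needed: for each arm $i$, the schedule $\alg$ may still play $i$ at the last round $T$, leaving its final recurrent interval open. I would additionally convert each such round-$T$ play into a non-play; this either closes the interval as $I(u,-(k-1))$ if the truncated run had $k\geq 2$ plays, or shortens it into a trailing sequence of non-plays if $k=1$, losing at most $1$ payoff per arm and thus at most $n$ in total. Combining the two losses gives $\E{\alg(T)} \geq \left(1-\tfrac{1}{1-\tau^L}\right)\opt(T) - n$, and a deterministic realization attaining this bound exists by the probabilistic method. The hard part of the argument is establishing the state-coupling inequality $\tau^{\alg}_i(t) \geq \tau^{\opt}_i(t)$ under the asymmetric two-sided transitions of $k$-$\mlsd$: since a deletion in $\alg$ sends the state back to $+1$ (rather than continuing to decrease as in the optimal schedule), the two trajectories diverge substantially, and it is precisely this inequality (combined with the monotonicity of $\p_i$) that allows the entire payoff gap to be charged only to the $\opt$-payoffs at the deleted rounds, rather than to the quite different states observed in $\alg$.
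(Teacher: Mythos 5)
Your proposal is correct, and its skeleton coincides with the paper's: both start from a deterministic optimal schedule, prune plays so that no arm is ever played more than $-\tau^L$ times in a row (which forces every completed recurrent interval to satisfy $l \geq \tau^L$), observe that the pruned schedule remains feasible because its plays are a subset of the optimal ones, and then invoke exactly the state-dominance property you isolate --- $\tau^{\alg}_i(t) \geq \tau^{\opt}_i(t)$ whenever $\alg$'s plays of arm $i$ are a subset of $\opt$'s --- together with monotonicity to charge the entire loss to the $\opt$-payoffs of the deleted rounds. (The paper asserts this dominance in one sentence; your inductive case analysis is the right way to justify it and goes through under the transition rule \eqref{eq:statestransition}.) Where you genuinely diverge is in how the total deleted payoff is bounded. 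The paper deletes deterministically every $(1-\tau^L)$-th consecutive play and then uses monotonicity a \emph{second} time: since states strictly decrease along a run, the deleted play is the cheapest in its window of $-\tau^L$ predecessors, so its payoff is at most their average, and disjointness of the windows yields $\Delta_i \leq \frac{1}{1-\tau^L}\opt_i(T)+1$. You instead randomize the deletion offset modulo $1-\tau^L$, so each play is deleted with probability exactly $\frac{1}{1-\tau^L}$, take expectations, and conclude by the probabilistic method. Your accounting is arguably cleaner --- it gives the bound with equality in expectation and does not need monotonicity of the payoffs \emph{within} a run, only for the dominance step --- at the mild cost of invoking an averaging argument to extract a deterministic schedule. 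Your end-of-horizon patch (converting a round-$T$ play into a non-play, losing at most $1$ per arm) plays the same role as the paper's omission of each arm's last play, and both yield the stated additive loss of $n$.
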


We remark that, by applying \Cref{lem:nearopt} for $\tau^L = -1$, one can reduce the problem to an instance of recharging bandits. Specifically, any $\gamma$-approximation algorithm for the latter (e.g., any of \cite{SLZZ21,papadigenopoulos2022nonstationary}) implies a long-run $\nicefrac{\gamma}{2}$-approximation algorithm for $k$-$\mlsd$. This approach, however, would cost an additional $\nicefrac{1}{2}$-factor in the approximation compared to the algorithm we develop in \Cref{sec:planning}.

\subsection{Definition and Properties of LP Relaxation}

For any instance of $k$-$\mlsd$ and integer $\tau^L \leq -1$, we consider the following LP relaxation:
\begin{align}
\max_{\x \succeq \bf 0} 
&~ \sum_{i \in \A}\sum_{u \in \states^{\tau^{\max}}_{1}}\sum_{l \in \states^{-1}_{\tau^L}} \aggr_i(u,l) \cdot x_{i,u,l}  \label{lp:LP} \tag{\textbf{LP}}\\
\text{s.t.}& \sum_{i \in \A}\sum_{u \in \states^{\tau^{\max}}_{1}}\sum_{l \in \states^{-1}_{\tau^L}} -l \cdot x_{i,u,l} \leq k, \label{lp:total}\tag{C.1}\\
&\sum_{u \in \states^{\tau^{\max}}_{1}}\sum_{l \in \states^{-1}_{\tau^L}} (u - l) \cdot x_{i,u,l} \leq 1, \forall i \in \A \label{lp:arm} \tag{C.2}.
\end{align}
In the above formulation each variable $x_{i,u,l}$ represents the fraction of time where arm $i$ participates in a recurrent interval $I(u,l)$ in a feasible solution. Constraints \eqref{lp:total} originate from the fact that, when at most $k$ arms can be played at each round, the (total) fraction of time any arm is pulled during any recurrent interval cannot be more than $k$ (recall, an arm is played exactly $-l$ times during $I(u,l)$). Constraints \eqref{lp:arm} hold due to the fact that, for every arm $i \in \A$, the various recurrent intervals it participates in cannot be overlapping in any feasible solution, by definition.

As we show in the following result, \eqref{lp:LP} (approximately and asymptotically) yields an upper bound on the optimal average expected payoff:

\begin{restatable}{lemma}{restaterelaxation} \label{lem:relaxation}
For any instance of $k$-$\mlsd$, let $\opt(T)$ be the optimal payoff collected in $T$ rounds. For the optimal value of \eqref{lp:LP}, denoted by $\lp^*$, it holds
$$
T \cdot \lp^* \geq \left(1 - \frac{1}{1-\tau^L}\right)\opt(T) - n.
$$
\end{restatable}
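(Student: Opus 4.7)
The plan is to use Lemma \ref{lem:nearopt} to reduce the claim to constructing a feasible LP solution of sufficient value from the near-optimal restricted schedule. Concretely, I would start with the deterministic schedule whose total payoff is $\overline{\opt}(T) \geq (1 - \frac{1}{1-\tau^L})\opt(T) - n$, in which the action sequence of every arm $i$ is a concatenation of recurrent intervals $I(u,l)$ with $l \geq \tau^L$, possibly followed by a tail of idle rounds. For each arm $i$ and each admissible pair $(u,l)$ with $u \in \states^{\tau^{\max}}_{1}$ and $l \in \states^{-1}_{\tau^L}$, let $n_{i,u,l}$ be the number of occurrences of $I(u,l)$ within the sequence of arm $i$.

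Next, I would define the candidate LP solution $x_{i,u,l} = n_{i,u,l}/T$, interpreting it as the (empirical) frequency at which arm $i$ starts the recurrent interval $I(u,l)$ per unit time. To verify feasibility, I would check each constraint separately. For constraint \eqref{lp:arm}, I use that the recurrent intervals of a single arm are pairwise non-overlapping by construction and each $I(u,l)$ occupies exactly $u-l$ consecutive rounds (see \Cref{def:trajectory}); thus $\sum_{u,l}(u-l)\, n_{i,u,l} \leq T$, which after dividing by $T$ yields \eqref{lp:arm}. For constraint \eqref{lp:total}, I count plays instead of rounds: each interval $I(u,l)$ accounts for exactly $-l$ pulls of arm $i$, and summing over $i$ and all intervals recovers the total number of plays across the entire schedule, which cannot exceed $kT$ since at every round at most $k$ arms are pulled. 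Dividing by $T$ yields \eqref{lp:total}.

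For the objective, I note that the payoff collected during each occurrence of $I(u,l)$ (under arm $i$) equals exactly the aggregated payoff $\aggr_i(u,l)$, so summing over all intervals in the restricted schedule gives $\overline{\opt}(T) = \sum_{i,u,l} \aggr_i(u,l)\, n_{i,u,l} = T \cdot \sum_{i,u,l} \aggr_i(u,l)\, x_{i,u,l}$. Since $\x$ is feasible, this quantity is a lower bound on $T \cdot \lp^*$, so $T \cdot \lp^* \geq \overline{\opt}(T)$. Combining with the bound from Lemma \ref{lem:nearopt} gives the stated inequality.

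The only real subtlety I expect is the bookkeeping at the end of the horizon: a schedule may leave a final (unfinished) recurrent interval, or a tail of pure non-plays, that contributes no count to $n_{i,u,l}$. This is exactly why I invoke Lemma \ref{lem:nearopt}, which already hands us a schedule whose non-trivial part decomposes cleanly into completed recurrent intervals with $l \geq \tau^L$, and absorbs the boundary loss into the $-n$ and $(1-\frac{1}{1-\tau^L})$ terms. Verifying \eqref{lp:arm} and \eqref{lp:total} with the appropriate inequality (rather than equality) accommodates both the potentially idle tail of each arm and the fact that fewer than $k$ arms may be pulled in some rounds, so no further slackness argument is needed.
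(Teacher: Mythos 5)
Your overall strategy is the same as the paper's: take the near-optimal restricted schedule from \Cref{lem:nearopt}, count occurrences of each recurrent interval, normalize by $T$ to get an LP solution, and verify feasibility and the objective value. However, there is a genuine gap in your construction. You define $n_{i,u,l}$ only for pairs with $u \in \states^{\tau^{\max}}_{1}$, i.e.\ $u \leq \tau^{\max}$, but \Cref{lem:nearopt} only restricts the lower endpoint ($l \geq \tau^L$); the schedule it produces can perfectly well contain recurrent intervals $I(u,l)$ with $u > \tau^{\max}$, since an arm may sit idle for far longer than $\tau^{\max}$ rounds between consecutive plays. These intervals have no corresponding LP variable, so under your definition they are simply dropped. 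Feasibility is unharmed by dropping them, but your key identity $\overline{\opt}(T) = \sum_{i,u,l} \aggr_i(u,l)\, n_{i,u,l}$ then fails: the payoff collected during those long intervals is not accounted for, and the LP solution you build may have objective value strictly below $\overline{\opt}(T)/T$, which breaks the final chain of inequalities.

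The paper's proof handles exactly this by folding all intervals with $u \geq \tau^{\max}$ into the variable for $u = \tau^{\max}$, setting $\Bar{x}_{i,\tau^{\max},l} = \sum_{u' \geq \tau^{\max}} \Bar{N}_i(u',l)/T$. This works for three reasons: constraint \eqref{lp:total} counts $-l$ plays per interval, which is independent of $u$; constraint \eqref{lp:arm} charges the folded intervals a length of only $\tau^{\max}-l \leq u-l$, so the inequality is preserved (this is where the constraint genuinely becomes an inequality rather than an equality); and, crucially, the finite saturation property gives $\aggr_i(u,l) = \aggr_i(\tau^{\max},l)$ for all $u \geq \tau^{\max}$, so no payoff is lost in the objective. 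Your proof needs this folding step (or an equivalent device) to be complete; the end-of-horizon bookkeeping you flag as the only subtlety is indeed handled by \Cref{lem:nearopt}, but the truncation at $u = \tau^{\max}$ is a separate issue that is not.
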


\section{Design and Analysis of the Planning Algorithm} 
\label{sec:planning}
Recall that each variable of our LP relaxation represents the fraction of time a specific pair of arm and recurrent interval occurs in an optimal solution and, hence, it has a higher dimension compared to the relaxations used in \cite{KI18,SLZZ21,papadigenopoulos2022nonstationary}. However, this increased dimension comes with a loss of any useful structure in the fractional solution returned, which makes its interpretation significantly harder compared to \cite{papadigenopoulos2022nonstationary} (there, the sparsity pattern of extreme point solutions almost reveals the ``optimal'' frequencies). 

In this section, we show how to overcome the above issue by proposing a novel planning algorithm and analyzing its approximation guarantees. We remark that in the planning setting we can assume w.l.o.g. that payoffs are deterministic, since their realizations do not affect the trajectory of the algorithm in any way.

\subsection{Description of the Algorithm and Main Result} \label{sec:algorithm}

At a high-level, the algorithm starts by sampling a {\em unique} recurrent interval for each arm using the information obtained by the relaxation. This is achieved by generalizing the rounding in \cite{papadigenopoulos2022nonstationary} and selecting each recurrent interval with marginal probability proportional to its corresponding LP variable and its length (number of distinct states). After sampling a unique recurrent interval for each arm, our algorithm constructs a fictitious copy of its state (called ``virtual'' state) which periodically evolves over the sampled interval. Notice, of course, that every such evolution implies a unique trajectory of actions (plays or non-plays) for each arm. Critically, the algorithm initiates the evolution of the virtual state over the corresponding recurrent interval from state chosen uniformly at random. At each round, our algorithm first selects the arms whose virtual state indicates that they must be played in order to remain within their virtual periodic trajectory. Among these arms, the algorithm plays the $k$ (or less) which maximize the total expected payoff collected, evaluated at the corresponding virtual states. 

Let $\x^*$ be an optimal solution to \eqref{lp:LP}, for some parameter $\tau^L$. Notice that, due to the fact that the payoff functions are provided explicitly as part of the problem instance, such a solution can be computed efficiently. Given that the produced solution $\x^*$ is generally fractional, our algorithm then proceeds in two main phases: the {\em initialization} (a.k.a. {\em offline}) phase, which in turn includes the steps of {\em randomized rounding} and {\em mirroring}, and the {\em online} phase. Each of these steps are described below in more detail. 

\paragraph{Randomized rounding.}
Given an optimal fractional solution $\x^* = \{x^*_{i,u,l}\}$ to \eqref{lp:LP}, the first step of our algorithm -- as part of its initialization -- is to select a (unique) recurrent interval for every arm. This is achieved via randomized rounding and is performed once and offline (i.e., before any arm-pulling). Specifically, for every arm $i \in \A$, the algorithm randomly samples a unique recurrent interval $I(u, l)$, with marginal probability $(u - l) \cdot x^*_{i,u,l}$ or no recurrent interval at all with probability $1-\sum_{u \in \states^{\tau^{\max}}_{1}}\sum_{l \in \states^{-1}_{\tau^L}} (u - l) \cdot x^*_{i,u,l}$. By constraints \eqref{lp:arm} of \eqref{lp:LP}, for each arm $i$ it holds that $\sum_{u \in \states^{\tau^{\max}}_{1}}\sum_{l \in \states^{-1}_{\tau^L}} (u - l) \cdot x^*_{i,u,l} \leq 1$ and, hence, the above sampling procedure is well-defined. In case no recurrent interval is sampled for some arm, then the arm is never played by the algorithm. At the end of this phase, each arm $i \in \A$ is associated with at most one recurrent interval, which we denote by ${I}_{i} = I(u_i, l_i)$. 

\paragraph{Mirroring virtual evolutions.} Let $\A' \subseteq \A$ be the subset of arms for which a recurrent interval is sampled during the rounding phase. For every arm $i \in \A'$, the algorithm defines an evolution of a ``{\em virtual}'' state, as a function of its sampled interval $I_i$. In particular, the virtual state of each arm evolves (periodically) over an infinite concatenation of copies of $I_i$. Notice that, by definition of a recurrent interval, the produced sequence of states is periodic and corresponds to a unique periodic sequence of actions (plays or non-plays) that can implement it. Afterwards, the algorithm randomly interleaves the sequences of virtual states of the arms by forcing each one to start a random number of steps into the future. Critically, this random number (called {\em offset}) is uniformly chosen from $r_i \sim \{0, \ldots, \|I_i\|_{ri}-1\}$, in a way that the virtual state can start (at time $t=1$) from any state involved in $I_i$, equiprobably. Let $\nu_i(t) \in \states_{l_i}^{u_i}$ denote the virtual state of arm $i \in \A'$ at time $t$.

\paragraph{Online phase.} \Cref{algo} then proceeds to its online phase. At any round $t$, the algorithm first constructs a set $C_t$ of {\em candidate} arms, which contains the set of arms $i \in \A'$ that satisfy $\beta_{I_i}\left(\nu_i(t)\right) = \bullet$, where $\nu_i(t)$ denotes its virtual state for the same round. In other words, the algorithm considers an arm $i \in \A'$ a candidate, if its virtual state would require a play in order to remain in the periodic trajectory induced by $I_i$. Then, the algorithm plays the (at most) $k$ arms of highest mean payoffs evaluated at their virtual states. 

As we describe in \Cref{algo}, all the above steps (including the simulation of the evolution of the virtual states) can be performed online and, hence, the algorithm does not require knowledge of the time horizon. Moreover, by setting $\tau_L = - \lceil \nicefrac{1}{\epsilon} \rceil$, it is easy to verify that the running-time of the algorithm is $\text{poly}(n, \tau^{\max},\frac{1}{\epsilon})$. The approximation guarantee of \Cref{algo} is summarized in the following Theorem, which is the main result of this work:

\begin{restatable}{theorem}{restatemain} \label{thm:main}
For any instance of $k$-$\mlsd$ and any fixed $\epsilon \in (0,1)$, the total expected payoff collected by \Cref{algo} in $T$ rounds is at least 
$$
(1-\epsilon) \left(1 - \frac{k^k}{e^k k!} \right) \opt(T) - \mathcal{O}\left(n + \tau^{\max} \cdot k\right),
$$
where $\opt(T)$ is the optimal expected payoff that can be collected. 
\end{restatable}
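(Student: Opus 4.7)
The plan is to show that the expected per-round payoff collected by \Cref{algo} is at least $(1 - k^k/(e^k k!))\cdot \lp^*$, and then invoke \Cref{lem:relaxation} to pass from $\lp^*$ back to $\opt(T)$. Setting $\tau^L = -\lceil 1/\epsilon \rceil$ in \Cref{lem:relaxation} yields $T \cdot \lp^* \ge (1-\epsilon)\,\opt(T) - n$, so the remaining task is the per-round comparison.

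Fix a round $t$ and define the \emph{virtual contribution} of arm $i$ by $W_{i,t} := p_i(\nu_i(t)) \cdot \mathbb{1}[i \in C_t]$. The per-arm randomness (interval sampling plus uniform offset) is independent across arms, and conditioning on $I_i = I(u,l)$ the offset renders $\nu_i(t)$ uniform over $\states^u_l$. A direct calculation then gives $\Pro{i \in C_t,\, \nu_i(t)=\tau} = \sum_{(u,l):\,\beta_{I(u,l)}(\tau)=\bullet} x^*_{i,u,l}$, so
$$
\E{\sum_i W_{i,t}} = \sum_i \sum_{u,l} x^*_{i,u,l}\!\left( p_i(u) + \sum_{\tau=l+1}^{-1} p_i(\tau) \right) = \sum_i \sum_{u,l} x^*_{i,u,l}\,\aggr_i(u,l) = \lp^*.
$$
Moreover, constraint \eqref{lp:total} forces $\sum_i \Pro{i \in C_t} \le k$, and the events $\{i \in C_t\}$ are independent across $i$.

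The expected actual payoff at round $t$ dominates the virtual one via two ingredients. First, by a \emph{monotonicity coupling}, actual plays always form a subset of virtual plays, so a short case analysis on the evolution of $\tau_i(\cdot)$ versus $\nu_i(\cdot)$ shows $\tau_i(t) \ge \nu_i(t)$ (as integers) whenever arm $i$ is played at round $t$; combined with $p_i$ non-decreasing, this gives $p_i(\tau_i(t)) \ge p_i(\nu_i(t))$, so the algorithmic payoff at round $t$ at least equals $\max_{S \subseteq C_t,\,|S| \le k}\, \sum_{i \in S} p_i(\nu_i(t))$. Second, use the layer-cake identity $p_i(\nu_i(t)) = \int_0^1 \mathbb{1}[p_i(\nu_i(t)) \ge \theta]\,d\theta$ to reduce the weighted top-$k$ selection to the unweighted rank function of the rank-$k$ uniform matroid at each threshold $\theta$. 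For each $\theta$, the indicators $X_i^\theta := \mathbb{1}[i \in C_t,\, p_i(\nu_i(t)) \ge \theta]$ are independent across $i$ with $\sum_i \E{X_i^\theta} \le k$, so the correlation-gap bound
$$
\E{\min\!\Big(\textstyle\sum_i X_i^\theta,\, k\Big)} \;\ge\; \Big(1 - \frac{k^k}{e^k k!}\Big)\sum_i \E{X_i^\theta}
$$
(tight in the Poisson limit) integrates in $\theta$ to $\E{\max_{S \subseteq C_t,\,|S|\le k}\sum_{i\in S} p_i(\nu_i(t))} \ge \big(1 - k^k/(e^k k!)\big)\,\lp^*$.

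Summing the per-round bound over $t \in [T]$ and combining with \Cref{lem:relaxation} yields the claimed guarantee. The additive $\mathcal{O}(n + \tau^{\max}\cdot k)$ absorbs the $-n$ from the LP bound together with the transient rounds near the endpoints of the horizon where the last sampled recurrent interval of a played arm may fail to complete (at most $\tau^{\max}$ rounds, at most $k$ arms per round). The step I expect to be the main obstacle is the monotonicity coupling: establishing $\tau_i(t) \ge \nu_i(t)$ requires careful tracking of how algorithm-skipped ``virtual plays'' (arms in $C_t$ but not in the top-$k$) propagate through the state dynamics, potentially flipping the sign of the actual state relative to the virtual one, while still preserving the inequality. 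The correlation-gap step, while standard, also relies crucially on the across-arm independence granted by per-arm sampling of intervals and offsets.
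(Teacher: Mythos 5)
Your overall architecture matches the paper's (per-round reduction to virtual states, a correlation-gap argument yielding $\gamma_k\cdot\lp^*$ per round, then \Cref{lem:relaxation}), but your treatment of the correlation-gap step is a genuinely different and arguably more elementary route. The paper works with the set of candidate \emph{triples}, shows its distribution is block-mutually-exclusive (\Cref{lem:distribution}), proves a comparison between block-mutually-exclusive and independent distributions for submodular functions (\Cref{lem:submodulargap}), invokes the weighted-rank correlation gap (\Cref{lem:correlationgap}), and separately relates the concave closure to $\lp^*$ (\Cref{lem:relationtolp}). You instead collapse each arm to a single Bernoulli indicator per threshold via the layer-cake identity $p_i(\nu_i(t))=\int_0^1\mathbb{1}[p_i(\nu_i(t))\ge\theta]\,d\theta$; since each arm contributes at most one indicator, the within-arm mutual exclusivity never enters, the indicators $X_i^\theta$ are genuinely independent across arms, and the unweighted correlation gap $\E{\min(\sum_i X_i^\theta,k)}\ge\gamma_k\sum_i\E{X_i^\theta}$ (valid because \eqref{lp:total} gives $\sum_i\E{X_i^\theta}\le k$) integrates to the desired bound. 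Your direct computation $\E{\sum_i W_{i,t}}=\lp^*$ is correct and replaces \Cref{lem:relationtolp}. This buys a shorter proof that avoids the block-mutually-exclusive machinery entirely; the paper's more general lemmas are reusable for other submodular objectives, but for this theorem your route suffices.

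The genuine gap is the monotonicity coupling, which you flag but do not prove, and which as you state it is actually false. The claim ``$\tau_i(t)\ge\nu_i(t)$ whenever arm $i$ is played at round $t$'' fails for small $t$: at $t=1$ every arm has actual state $\tau_i(1)=1$, but the random offset can place the virtual state at $\nu_i(1)=u_i\ge 2$, in which case the arm is a candidate (indeed $\beta_{I_i}(u_i)=\bullet$) and, if played, yields actual payoff $p_i(1)\le p_i(u_i)$ --- the wrong direction. The paper's \Cref{lem:comparestate} establishes $\tau_i(t)\ge\nu_i(t)$ only for $t\ge\tau^{\max}$, via an induction whose base case is set at a round $t_0\le\tau^{\max}$ chosen as the step right after the first virtual play; the inductive step then requires a careful case analysis on the sign of $\nu_i(t)$ and whether the arm is actually played (the actual and virtual action sequences can genuinely diverge, since a candidate need not be played). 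Consequently your plan of ``summing the per-round bound over $t\in[T]$'' is not valid as written: the first $\tau^{\max}-1$ rounds must be discarded, and this burn-in at the \emph{beginning} of the horizon --- not an end-of-horizon truncation of incomplete recurrent intervals, as you suggest --- is what produces the $\mathcal{O}(\tau^{\max}\cdot k)$ additive loss via $\opt(T)\le kT$. Your additive budget happens to have the right order, so the proof is repairable, but the coupling lemma and the correct identification of where it holds are the actual technical core that is missing from the proposal.
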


Notice that using Stirling's formula the multiplicative factor in the guarantee of \Cref{thm:main} can be closely approximated by $(1-\epsilon) \left(1 - \frac{1}{\sqrt{2 \pi k}} \right)$, for large enough $k$. Finally, before we proceed to the proof of \Cref{thm:main}, we remark that the approximation guarantees we provide are tight for any $k$ (modulo the $(1 - \epsilon)$-factor): 

\begin{remark}
For any $k$, there exists an instance of $k$-$\mlsd$ where the approximation guarantee of \Cref{algo} is asymptotically equal to $1 - \frac{k^k}{e^k \cdot k!}$. This implies that our analysis in \Cref{thm:main} is tight (up to the $(1-\epsilon)$-factor). We address the reader to \Cref{sec:tightexample} for additional details. 
\end{remark}

\begin{algorithm2e} \label{algo}
\DontPrintSemicolon
\caption{Planning $k$-$\mlsd$ bandits.}
    Compute an optimal solution $\x^*$ to \eqref{lp:LP} with parameters $\tau^{\max}$ and $\tau^L = - \lceil \nicefrac{1}{\epsilon} \rceil$ for $\epsilon \in (0,1)$.\;
    Set $\A' \gets \emptyset$.\;
    \For{{each arm} $i \in \A$}{
    Sample a unique recurrent interval $I_{i}=I(u_i, l_i)$ with marginal probability $(u_i - l_i)\cdot x^*_{i, u_i, l_i}$. \;
    \If{a recurrent interval is sampled}{ 
    Draw $r_i \sim \mathcal{U}\{0,1, \ldots,\|I_i\|_{ri}-1\}$.\;
    Initialize the {\em virtual} state $\nu_i(0) \gets 1$. \; 
    \For{$\ell = 1,2, \ldots, r_i$}{
        $\nu_i(0) \gets \delta_{I_i}(\nu_i(0))$.
    }
    Add $i$ to $\A'$.\;}
    }
    \For{$t = 1,2, \ldots$}{
    \For{{each arm} $i \in \A'$}{
        $\nu_i(t) \gets \delta_{I_i}(\nu_i(t-1))$.
    }
    Let $C_t \subseteq \A'$ be the subset of {\em candidate} arms, defined as $C_t = \left\{i \in \A' \mid \beta_{I_i}\left(\nu_i(t)\right) = \bullet \right\}.$ \;
    Play the subset of $k$ arms in $C_t$ of maximum total expected payoff evaluated at the virtual states: 
    $$A_t = \underset{S \subseteq C_t, |S| \leq k}{\text{argmax}} \sum_{i \in S} p_i(\nu_i(t)).\;$$
}
\end{algorithm2e}

\subsection{Analysis of the Approximation Guarantee}
\label{sec:analysis}

Before we present the analysis of \Cref{algo} -- which leads to the proof of \Cref{thm:main} -- we remark that the correctness of the algorithm follows immediately from the fact that at most $k$ arms are played at each round. In order to lower-bound the total expected payoff of \Cref{algo}, we focus on analyzing the expected payoff collected at any fixed round. Our goal is to show that the latter consists a constant fraction of the average optimal expected payoff, that is, $\nicefrac{\opt(T)}{T}$. Having established that, the proof simply follows by linearity of expectation.

\paragraph{Reduction to virtual states.} Let us fix any time step $t \in [T]$ with $t \geq \tau^{\max}$ and let us denote by $\A'$ be the subset of arms for which a recurrent interval has been sampled during the randomized rounding step of our algorithm. Recall that, at time $t$, the algorithm plays the $k$ arms in the candidate set $C_t$ (or fewer, if $|C_t| < k$) with the highest expected payoff, evaluated at their virtual states. The first step of our proof is to show that the actual expected payoff collected, i.e., the one evaluated at the actual states, is at least as much as the one evaluated at the virtual ones. This is implied immediately by the following stronger result:

\begin{restatable}{lemma}{restatecomparestate}
\label{lem:comparestate}
For every round $t \geq \tau^{\max}$ and arm $i \in \A'$, it deterministically holds that $\tau_i(t) \geq \nu_i(t)$. By monotonicity of the payoff functions, this further implies that $p_i(\tau_i(t)) \geq p_i(\nu_i(t))$.
\end{restatable}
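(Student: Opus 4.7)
The plan is to prove $\tau_i(t) \geq \nu_i(t)$ by induction on $t$, from which the payoff comparison follows immediately via monotonicity of $p_i$. The proof rests on one structural observation together with an inductive step and a base case established by round $\tau^{\max}$.

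First I would record the structural observation that, by construction of \Cref{algo}, arm $i$ is actually played at round $t$ only if $i \in C_t$, i.e., only if $\beta_{I_i}(\nu_i(t)) = \bullet$. So the actual plays form a subset of the virtual plays for each $i \in \A'$; in particular, whenever the virtual schedule does not prescribe a play, neither does the actual schedule.

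For the inductive step, assuming $\tau_i(t) \geq \nu_i(t)$, I would verify $\tau_i(t+1) \geq \nu_i(t+1)$ by a short case analysis on four parameters: the virtual action $\beta_{I_i}(\nu_i(t)) \in \{\bullet, \bot\}$, the sign of $\nu_i(t)$, whether the algorithm actually plays $i$, and the sign of $\tau_i(t)$. Each combination is routine via the transition rules in \eqref{eq:statestransition} and the definition of $\delta_{I_i}$. The subcases worth flagging are $\nu_i(t) = u_i$ (where the induction hypothesis forces $\tau_i(t) \geq u_i > 0$, so both possible actual next-states dominate $\nu_i(t+1) = -1$) and $\nu_i(t) \in \{l_i+1, \ldots, -1\}$ with virtual play (where both possible actual next-states, namely $-1$ after playing from a positive state, $\tau_i(t)-1 \geq \nu_i(t)-1$ after playing from a negative one, and $\tau_i(t)+1$ or $1$ after a skip, all dominate $\nu_i(t+1) = \nu_i(t)-1$).

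The main obstacle is the base case, since one cannot hope for $\tau_i(t) \geq \nu_i(t)$ at $t=1$: indeed $\tau_i(1) = 1$ while the random offset can make $\nu_i(1)$ as large as $u_i$. I would instead show that the invariant becomes valid at some round $t_0 \leq \tau^{\max}$ via a case split on $\nu_i(1)$. If $\nu_i(1) \in \{l_i, \ldots, -1, 1\}$, the invariant can be checked within one or two rounds by direct evaluation of the transitions, using that the actual state after one round lies in $\{-1\} \cup \mathbb{Z}_{\geq 1}$ while the virtual state has either jumped to $1$ (from $l_i$) or lies in $\{l_i, \ldots, -2\}$ (after a negative play). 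The delicate case is $\nu_i(1) \in \{2, \ldots, u_i\}$: throughout the virtual ascent $\nu_i(1) \to \nu_i(1)+1 \to \cdots \to u_i$ the algorithm never plays $i$ (since virtual does not), so the actual state simply counts upward; when virtual transitions from $u_i$ to $-1$ at round $u_i - \nu_i(1) + 2 \leq u_i \leq \tau^{\max}$, the actual state is at least $-1$, establishing the invariant. The inductive step then propagates it to every subsequent round. This forced wait for the virtual ascent to complete is precisely why the lemma requires $t \geq \tau^{\max}$.
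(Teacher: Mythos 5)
Your proof is correct and takes essentially the same route as the paper's: an induction on $t$ whose base case is established at some round $t_0 \leq \tau^{\max}$ by splitting on whether $\nu_i(1) \leq 1$ or $\nu_i(1) \geq 2$ (waiting out the virtual ascent to $u_i$ in the latter case), followed by a case analysis on the virtual state and the transition rules for the inductive step. If anything, your inductive step is slightly more explicit than the paper's write-up in also tracking the subcases where a candidate arm is not actually played.
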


By applying the above Lemma, we can lower-bound the expected payoff collected at round $t$ as follows: 
\begin{align}
\Ex{}{\max_{S \subseteq C_t, |S| \leq k} \sum_{i \in S} p_i(\tau_i(t))} \geq \Ex{}{\max_{S \subseteq C_t, |S| \leq k} \sum_{i \in S} p_i(\nu_i(t))}, \label{eq:virtualstates}
\end{align}
where the expectation above is taken over the randomness of sampling recurrent intervals and the offsets. 

\paragraph{Candidate triples and their distribution.} Inequality \eqref{eq:virtualstates} allows us to reduce the analysis to studying the evolution of only the virtual states of the arms. For any fixed time step $t \geq \tau^{\max}$, the set of candidate arms $C_t$ contains the subset of arms $i \in \A'$ that are played at their virtual state $\nu_i(t)$, namely, those which satisfy $\beta_{I_i}(\nu_i(t)) = \bullet$. This set, however, does not give any information regarding the actual recurrent interval sampled nor the virtual state under which an arm is a candidate. 

For this reason, we extend the notation by introducing the following more expressive set, which we refer to as the set of {\em candidate triples}:
$$
\mathcal{T}_t = \{(i, I(u,l), \nu) \;|\; i \in C_t,\; I(u,l) = I_i,\; \nu = \nu_i(t)\}.
$$
Further, for every arm $i \in \A$, let $$U^i = \left\{(i, I(u , l), \nu) \;|\; u \in \states^{\tau^{\max}}_1 ,\; l \in \states^{-1}_{\tau^{L}},\; \beta_{I(u,l)}(\nu) = \bullet \right\}$$ denote the set of all possible candidate triples involving arm $i$, and let $U = \bigcup_{i \in \A} U^i$ denote the set of all possible candidate triples for all arms. Finally, let, $\mathcal{T}^i_t = \mathcal{T}_t \cap U^i$
denote the set of candidate triples of time $t$ involving arm $i$.

At this point, we observe that the set $\mathcal{T}^i_t$, for any arm $i \in \A$, is distributed independently of other arms. This is because each $\mathcal{T}^i_t$ is a function of the sampled recurrent interval $I_i$ and the choice of the random offset $r_i$, and each of these quantities is drawn independently for every arm. In addition, we observe that the event that a specific triple $(i,I(u,l),\nu)$ belongs to $\mathcal{T}^i_t$ is mutually-exclusive to that of any other triple corresponding to the same arm. Indeed, at most one recurrent interval $I_i$ can be sampled for arm $i$ in the offline phase and -- assuming one was indeed sampled -- the arm can be in exactly one virtual state at each time.  

The above discussion motivates the definition of the following class of distributions: 

\begin{definition}[Block-Mutually-Exclusive]
Consider a ground set $[m]$ of elements and a given partition into subsets $V^1, \ldots, V^n$, such that $V^i \cap V^j = \emptyset$, for every $i \neq j$, and $\bigcup_{j \in [n]} V^j = [m]$. A distribution $\mathcal{D}$ over $2^{[m]}$ is called block-mutually-exclusive, if it first samples at most one element from each subset $V^i$ (independently of other subsets), using a marginal distribution $\mathcal{D}^i$ over the subsets of $2^{V^i}$ of at most one element (i.e., the singletons and the empty set), and then return the union of the sampled elements. We refer to the distributions $\mathcal{D}^1, \ldots, \mathcal{D}^n$ as {\em block-marginals}.
\end{definition}

In other words, in a block-mutually-exclusive distribution, the sampling of elements across different subsets $V^i$ and $V^j$ for $i \neq j$ is independent, but within each set it is mutually-exclusive.

The following lemma characterizes the distribution of the set of candidate triples $\mathcal{T}_t$:
\begin{restatable}{lemma}{restatedistribution} \label{lem:distribution}
    For any fixed time step $t$, the distribution of $\mathcal{T}_t$, denoted by $\mathcal{C}(\x^*)$, is block-mutually-exclusive with block-marginals $\mathcal{C}^1(\x^*), \ldots, \mathcal{C}^n(\x^*)$ where, for every $i \in \A$, $\mathcal{C}^i(\x^*)$ is a distribution over singletons of $2^{U^i}$ and the empty set. Further, for every $i \in \A$ and $(i, I(u,l), \nu) \in U^i$, it holds $\Prob{S \sim \mathcal{C}^i(\x^*)}{S = \{(i, I(u,l), \nu)\}} = x^*_{i, u, l}$ and $\Prob{S \sim \mathcal{C}^i(\x^*)}{S = \emptyset} = 1-\sum_{(i,I(u,l), \tau) \in U^i} x^*_{i, u, l}$.
\end{restatable}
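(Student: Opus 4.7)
The plan is to verify the three structural properties of $\mathcal{C}(\x^*)$ separately: independence across arms, mutual exclusivity within each arm, and the claimed per-triple marginal probabilities. The first two properties follow almost directly from the mechanics of Algorithm~\ref{algo}, while the third reduces to a clean counting argument that exploits the uniform random offset.

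First I would establish the block structure. Observe that $\mathcal{T}^i_t$ is determined entirely by the pair $(I_i, r_i)$ --- the recurrent interval sampled for arm $i$ during the rounding phase and its offset --- together with the deterministic evolution of $\nu_i(\cdot)$. Since these pairs are drawn independently across arms in \Cref{algo}, the sets $\mathcal{T}^1_t, \ldots, \mathcal{T}^n_t$ are mutually independent, giving block-independence across the partition $U = \bigcup_i U^i$. For mutual exclusivity within a single block, note that at most one recurrent interval $I_i$ is sampled for arm $i$ and, given this interval, the virtual state $\nu_i(t)$ is uniquely determined; thus at most one triple $(i, I(u,l), \nu) \in U^i$ can appear in $\mathcal{T}_t$, so $\mathcal{C}^i(\x^*)$ is supported on the singletons of $U^i$ together with the empty set.

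Next I would compute the per-triple marginals. Fix an arm $i \in \A$ and a triple $(i, I(u,l), \nu) \in U^i$. By construction, the randomized rounding selects $I(u,l)$ as $I_i$ with probability exactly $(u-l) \cdot x^*_{i,u,l}$. Conditioned on $I_i = I(u,l)$, unrolling the algorithm's initialization and online updates gives $\nu_i(t) = \delta_{I(u,l)}^{\,t + r_i}(1)$. From \Cref{def:trajectory} one can verify that $\delta_{I(u,l)}$ acts as a single cyclic permutation of length $\|I(u,l)\|_{ri} = u-l$ on $\states^u_l$ --- explicitly, the orbit starting at $+1$ traverses $1, 2, \ldots, u, -1, -2, \ldots, l$ and then returns to $1$. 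Since $r_i$ is drawn uniformly from $\{0, \ldots, u-l-1\}$, the shifted index $t + r_i \pmod{u-l}$ is uniform, so $\nu_i(t)$ is uniform on $\states^u_l$; hence $\Pr[\nu_i(t) = \nu \mid I_i = I(u,l)] = \tfrac{1}{u-l}$, and multiplying gives $\Pr[\mathcal{T}^i_t = \{(i, I(u,l), \nu)\}] = x^*_{i,u,l}$. The empty-set probability follows by normalization: summing the singleton probabilities over the $-l$ ``play'' states $\nu$ associated with each $(u,l)$ accounts for every element of $U^i$, so the complement equals $1 - \sum_{(i,I(u,l),\tau) \in U^i} x^*_{i,u,l}$, which is nonnegative because $-l < u-l$ and $\sum_{u,l} (u-l) x^*_{i,u,l} \leq 1$ by constraint~\eqref{lp:arm}.

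The only mildly delicate step is the uniformity of $\nu_i(t)$: everything hinges on the fact that $\delta_{I_i}$ generates a single orbit of length exactly $u-l$ on $\states^u_l$, so that a uniform cyclic shift yields a uniform state. This is essentially built into the definition of a recurrent interval, but I would spell the orbit out explicitly as above so that the uniform-shift step is transparent and the marginal probability computation cleanly factors as the product of the interval-sampling probability $(u-l) x^*_{i,u,l}$ and the reciprocal cycle length $\tfrac{1}{u-l}$.
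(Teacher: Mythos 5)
Your proof is correct and follows essentially the same route as the paper's: independence of the blocks from the independent per-arm sampling, mutual exclusivity from the uniqueness of the sampled interval and virtual state, and the marginal $x^*_{i,u,l}$ obtained as the product $(u-l)x^*_{i,u,l}\cdot\frac{1}{u-l}$. The only difference is that you explicitly justify the conditional uniformity of $\nu_i(t)$ via the single-cycle structure of $\delta_{I(u,l)}$ under a uniform offset, a step the paper asserts without elaboration.
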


Due to \Cref{lem:distribution}, the RHS in \eqref{eq:virtualstates} can be thus written as a function of $\mathcal{C}(\x^*)$ as follows:
\begin{align}
\Ex{}{\max_{S \subseteq C_t, |S| \leq k} \sum_{i \in S} p_i(\nu_i(t))}
=
\Ex{C \sim \mathcal{C}(\x^*)}{\max_{S \subseteq C, |S| \leq k} \sum_{(i, I(u,l), \nu) \in S} p_i(\nu)}. \label{eq:triples}
\end{align}

\paragraph{Reaching the LP upper bound via the correlation gap.} The next step in our analysis is to associate the RHS in the equality \eqref{eq:triples} with the optimal solution of \eqref{lp:LP}. In this direction and following the paradigm of \cite{papadigenopoulos2022nonstationary}, we first observe that the function $g_k: 2^U \rightarrow [0, \infty)$, defined as
$$ 
g_k(C) = \max_{S \subseteq C, |S| \leq k} \sum_{(i, I(u,l), \nu) \in S} p_i(\nu),
$$
is monotone non-decreasing submodular\footnote{A function $f:2^{[n]} \rightarrow [0,\infty)$ is submodular if for every $S, T \subseteq [n]$ it holds that $f(S) + f(T) \geq f(S \cup T) + f(S \cap T)$.}. In particular, $g_k(C)$ can be thought of as an instance of the weighted rank function of the (rank-$k$) uniform matroid, defined as follows: given a weight vector $\w \in [0, \infty)^m$ over a ground set $[m]$ of elements, the weighted rank function of the $k$-uniform matroid is given by $f_{\w,k}(S) = \max_{\substack{I \subseteq S \\ |I| \leq k}} \; \sum_{i \in I} w_i.$

For any set function $f:2^{[m]} \rightarrow [0, \infty)$ and vector $\y \in [0,1]^m$, one can define the following standard continuous extensions: (a) the {\em multilinear extension} $F: [0,1]^m \rightarrow [0, +\infty)$ given by $F(\y) = \Ex{S \sim \mathcal{I}(\y)}{f(S)},$ and (b) the {\em concave closure} $f^+: [0,1]^m \rightarrow [0, \infty)$ given by $f^+(\y) := \sup_{\mathcal{D}(\y)}\Ex{S \sim \mathcal{D}(\y)}{f(S)}.$ 

Let us denote $\gamma_k = \left(1-\frac{k^k}{e^k k!}\right)$. For the particular case of $f_{\w,k}$, the following result is known:

\begin{lemma}[Correlation Gap \cite{yan11}]
\label{lem:correlationgap}
Let $f_{\w,k}: 2^{[m]} \rightarrow [0,+\infty)$ be the weighted rank function of the rank-$k$ uniform matroid. Then, for any $\y \in [0,1]^m$, we have 
$$f^+_{\w,k}(\y) \geq F_{\w,k}(\y) \geq \gamma_k \cdot f^+_{\w,k}(\y).$$
\end{lemma}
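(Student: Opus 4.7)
The first inequality $f^+_{\w,k}(\y) \ge F_{\w,k}(\y)$ is immediate: the product distribution $\I(\y)$ has marginals $\y$, hence is feasible for the supremum defining the concave closure, and therefore $F_{\w,k}(\y) = \Ex{S \sim \I(\y)}{f_{\w,k}(S)} \le f^+_{\w,k}(\y)$.

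For the nontrivial direction $F_{\w,k}(\y) \ge \gamma_k f^+_{\w,k}(\y)$, my plan is a three-step reduction followed by a direct Poisson computation. First, by standard polymatroid/LP duality, $f^+_{\w,k}(\y) = \max\{\sum_i w_i z_i : 0 \le z \le y,\; \sum_i z_i \le k\}$. Letting $\y'$ attain this maximum, monotonicity of $F_{\w,k}$ in $\y$ (since $\I$ is stochastically monotone and $f_{\w,k}$ is monotone non-decreasing) gives $F_{\w,k}(\y) \ge F_{\w,k}(\y')$, so it suffices to prove the bound for $\y'$. Hence we may assume $\sum_i y_i \le k$ and $f^+_{\w,k}(\y) = \sum_i w_i y_i$. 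Second, I reduce to uniform weights: the claim is that the ratio $F_{\w,k}(\y)/\sum_i w_i y_i$ is minimized (over $\w \ge 0$) at the uniform weight vector, which I will verify via a two-coordinate equalization argument on pairs $(w_i, w_j)$, exploiting that smoothing weights can only weaken the "top-$k$" selection event governing $F_{\w,k}$. Third, with $w_i \equiv 1$ and $\sum_i y_i = k$ (the tight case, by another scaling), the inequality becomes $\Ex{S \sim \I(\y)}{\min(|S|, k)} \ge \gamma_k \cdot k$. Schur-concavity of $\y \mapsto \Ex{}{(k-|S|)^+}$ on $\{\y : \sum_i y_i = k\}$ makes the worst case the symmetric/sparse limit $y_i \to 0$ uniformly with the support size $m \to \infty$, in which $|S|$ converges in distribution to $X \sim \mathrm{Poisson}(k)$.

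A direct computation using the identity $j \cdot k^j/j! = k \cdot k^{j-1}/(j-1)!$ then yields
\begin{align*}
\Ex{}{\min(X,k)} &= \sum_{j=0}^{k-1} j \cdot \frac{e^{-k} k^j}{j!} + k \cdot \Pr(X \ge k) \\
&= k \cdot \Pr(X \le k-2) + k \cdot \Pr(X \ge k) \\
&= k \bigl(1 - \Pr(X = k-1)\bigr) = k \left(1 - \frac{k^k}{e^k k!}\right) = \gamma_k \cdot k,
\end{align*}
so dividing by $k$ and unwinding the reductions gives $F_{\w,k}(\y) \ge \gamma_k f^+_{\w,k}(\y)$, as desired.

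The main obstacle I expect is the reduction to uniform weights: the swap argument must carefully compare the changes in $F_{\w,k}(\y)$ (which depends on the weights through the combinatorial top-$k$ ordering event) against the linear change in the denominator $\sum_i w_i y_i$ when equalizing a pair $(w_i, w_j)$. A cleaner alternative that sidesteps this entire step is to exhibit a $\gamma_k$-balanced contention-resolution scheme $\pi$ for the rank-$k$ uniform matroid—concretely, $\pi(S) = S$ when $|S| \le k$ and a uniformly random $k$-subset of $S$ otherwise—and show $\Pr(i \in \pi(S) \mid i \in S) \ge \gamma_k$ via essentially the same Poisson calculation above. This would directly yield $F_{\w,k}(\y) \ge \sum_i w_i \Pr(i \in \pi(S)) \ge \gamma_k \sum_i w_i y_i = \gamma_k f^+_{\w,k}(\y)$, bypassing the weight-swap step altogether.
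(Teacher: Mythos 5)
The paper does not actually prove this lemma: it is imported verbatim from Yan \cite{yan11} as a black box, so there is no internal proof to compare against. Judged on its own terms, your outline has the right skeleton --- the first inequality is indeed immediate, the LP characterization $f^+_{\w,k}(\y)=\max\{\w^\top\z: \mathbf{0}\preceq\z\preceq\y,\ \sum_i z_i\le k\}$ is correct (its nontrivial direction is exactly the construction carried out in Lemma~\ref{lem:relationtolp} of this paper), and your Poisson computation $\Ex{}{\min(X,k)}=k(1-\Pro{X=k-1})=\gamma_k\cdot k$ is right. But two of your reductions are asserted rather than proved, and one of them is the crux. The reduction to uniform weights via ``two-coordinate equalization'' is not fleshed out and is genuinely delicate: changing a single weight moves both $F_{\w,k}(\y)$ (through the combinatorial top-$k$ event) and the denominator, and I do not see how the swap comparison closes. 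The standard way to kill the weights is the level-set decomposition: with $w_1\ge\cdots\ge w_m\ge w_{m+1}=0$ one has $f_{\w,k}(S)=\sum_{j}(w_j-w_{j+1})\min(|S\cap[j]|,k)$, so $F_{\w,k}$ is the same nonnegative combination of unweighted multilinear extensions, while $f^+_{\w,k}$ is at most the corresponding combination of unweighted concave closures (a supremum of a sum is at most the sum of suprema); this reduces everything to the unweighted rank function in one line. Separately, your ``by another scaling'' reduction to the tight case $\sum_i y_i=k$ also has a hole: padding with dummy elements to raise the total mass to $k$ bounds $\Ex{}{\min(|S|,k)}$ from the wrong side, so you need the monotonicity of $\mu\mapsto\Ex{}{(X-k)^+}/\mu$ (or to prove the bound directly for all $\mu\le k$), and the Schur-concavity/Poisson-limit step should be pinned to Hoeffding's comparison theorem for Poisson--binomial sums rather than left implicit.

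Your proposed escape hatch via contention resolution is in fact the cleanest correct route and I would adopt it as the main argument rather than a fallback: for $\pi(S)=S$ if $|S|\le k$ and a uniformly random $k$-subset otherwise, one computes $\Prob{}{i\in\pi(S)\mid i\in S}=\Ex{}{\min(1,k/(1+X_{-i}))}$ with $X_{-i}$ Poisson--binomial of mean at most $k$, and in the Poisson($k$) worst case the identity $\Ex{}{\tfrac{1}{1+X}\cdot\event{X\ge k}}=\tfrac{1}{k}\Pro{X\ge k+1}$ gives balancedness exactly $1-\Pro{X=k}=\gamma_k$. This handles arbitrary weights for free, since $F_{\w,k}(\y)\ge\sum_i w_i\Prob{}{i\in\pi(S)}\ge\gamma_k\sum_i w_i y_i=\gamma_k f^+_{\w,k}(\y)$ once $\sum_i y_i\le k$, and it sidesteps both of the gaps above.
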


Using the above, one could potentially associate the RHS of equality \eqref{eq:triples} with the concave closure of $g_k$, denoted by $g_k^+(\x^*)$, thus moving a step closer to the end goal of recovering the optimal value of the LP. However, the definition of multilinear extension assumes that expectations are taken by including each element {\em independently} to a random set with marginals given by $\x^*$, which is not the case in our setting. 

We resolve the above issue by showing that, for the particular case of block-mutually-exclusive distributions, as $\mathcal{C}(\x^*)$, the expectation of $g_k(C)$, can only decrease if the elements were instead added to $C$ independently, but with the same marginals.

\begin{restatable}{lemma}{restatesubmodgap} \label{lem:submodulargap}
Let $f:2^{[m]} \rightarrow [0, +\infty)$ be a submodular set function. For any $\y \in [0,1]^m$ and block-mutually-exclusive distribution $\mathcal{C}(\y)$ with marginals $\y$, we have
$$\Ex{S \sim \mathcal{C}(\y)}{f(S)} \geq \Ex{S \sim \mathcal{I}(\y)}{f(S)}.$$
\end{restatable}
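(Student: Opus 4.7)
The plan is to interpolate between $\mathcal{I}(\y)$ and $\mathcal{C}(\y)$ through a sequence of local swaps, each preserving the marginals $\y$ and weakly increasing the expectation of $f$ by a single application of submodularity. I will fix an enumeration of $[m]$ that lists the elements of each block consecutively, and define a hybrid distribution $\mathcal{H}_k$ in which, within every block, the first $k$ elements (in that enumeration) form a \emph{mutually-exclusive core}---exactly one of them is included as a singleton, with probability equal to its marginal, otherwise the core is empty---while the remaining elements of the block are appended independently with their own marginals. Different blocks are always sampled independently of one another. By construction, $\mathcal{H}_0 = \mathcal{I}(\y)$ and $\mathcal{H}_m = \mathcal{C}(\y)$, and every intermediate distribution has marginals $\y$, so it suffices to establish $\Ex{S \sim \mathcal{H}_{k+1}}{f(S)} \geq \Ex{S \sim \mathcal{H}_k}{f(S)}$ for every $k$.

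To prove this one-step inequality, I would let $e$ denote the element whose status changes in the swap (from independently sampled to joining its block's core), and condition on the realization $T$ of all random choices that are identically distributed under the two hybrids: the samples from every other block, and from the still-independent elements of $e$'s block. The restriction $h(A) := f(T \cup A)$ remains nonnegative and submodular on subsets of the affected elements. A direct expansion of the two conditional expectations, followed by careful bookkeeping of the empty-core probabilities, should show that their difference equals
\[
y_e \sum_{j \in \mathrm{core}_k} y_j \Bigl[ h(\{j\}) + h(\{e\}) - h(\emptyset) - h(\{j, e\}) \Bigr],
\]
where $\mathrm{core}_k$ denotes the set of elements already in the mutually-exclusive core of $e$'s block under $\mathcal{H}_k$. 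Each bracket is nonnegative by the pairwise form of submodularity, and the prefactors $y_e y_j$ are nonnegative, so the step is monotone. Chaining the inequalities across all $m$ swaps and taking the outer expectation over $T$ concludes the argument.

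The main obstacle is the bookkeeping inside the conditional expansion. Under $\mathcal{H}_k$ the empty-core probability equals $1 - \sum_{j \in \mathrm{core}_k} y_j$ and element $e$ appears independently with the extra factor $(1 - y_e)$, whereas under $\mathcal{H}_{k+1}$ the empty-core probability is $1 - y_e - \sum_{j \in \mathrm{core}_k} y_j$ with $e$ absorbed into the core. The crucial identity
\[
\bigl(1 - y_e - \textstyle\sum_j y_j\bigr) - \bigl(1 - \textstyle\sum_j y_j\bigr)(1 - y_e) \;=\; -\,y_e \textstyle\sum_j y_j
\]
is precisely what forces all cross terms to collapse into the positive combination of pairwise submodularity inequalities $h(\{j\}) + h(\{e\}) \geq h(\emptyset) + h(\{j,e\})$. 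Once this cancellation is spotted the rest is routine algebra, and no further assumption on $f$ (monotonicity, symmetry, or otherwise) is needed.
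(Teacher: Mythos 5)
Your proof is correct, and the one-step computation you sketch does close: with core $C$ and the element $e$ being absorbed, the conditional difference expands exactly to $y_e\sum_{j\in C}y_j\bigl[h(\{j\})+h(\{e\})-h(\emptyset)-h(\{j,e\})\bigr]$, each bracket nonnegative by submodularity, and the empty-core identity you flag is precisely the cancellation needed. Your route shares the paper's high-level hybrid strategy -- both arguments interpolate from the independent to the block-mutually-exclusive distribution while conditioning on everything whose law is unchanged -- but the granularity and the key inequality differ. The paper switches an \emph{entire block} at once and reduces that single comparison to a one-line fact: the marginal function $f(\cdot\mid A)$ is subadditive with $f(\emptyset\mid A)=0$, and for any distribution supported on singletons and the empty set, $\Ex{S\sim\mathcal{D}}{g(S)}=\sum_e g(\{e\})\Pro{\{e\}}$ dominates $\Ex{S\sim\mathcal{I}}{g(S)}$ via $g(S)\le\sum_{e\in S}g(\{e\})$. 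You instead switch \emph{one element at a time} and verify each step by direct expansion and the pairwise (second-difference) form of submodularity. The paper's block-level lemma buys brevity and isolates a reusable statement about subadditive functions; your element-level version avoids introducing subadditivity at all and makes the mechanism -- where exactly submodularity is consumed -- completely explicit, at the cost of the bookkeeping you describe. One small point worth stating explicitly if you write this up: each partial core is a valid sub-distribution because $\sum_{j\in V^i}y_j\le 1$ for every block, so every prefix sum is at most $1$ and the hybrid $\mathcal{H}_k$ is well defined.
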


We remark that the above result allows us to leverage any known correlation gap result for (not necessarily monotone) submodular functions for settings where the underlying distribution is block-mutually-exclusive. 

The final step is to relate the concave closure $g^+_k(\x^*)$ to the optimal value of \eqref{lp:LP}. 

\begin{restatable}{lemma}{restatesuptolp}
\label{lem:relationtolp}
    Let $\lp^*$ denote the optimal value of \eqref{lp:LP}. Then, we have $g^+_k(\x^*) \geq \lp^*.$
\end{restatable}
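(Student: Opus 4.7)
The plan is to exhibit a convex decomposition of a suitable lift of $\x^*$ into indicator vectors of ``small'' sets on which $g_k$ acts linearly, and thereby realize $\lp^*$ as a feasible value in the variational definition of the concave closure. For convenience, I write $w_e := p_i(\nu)$ for each triple $e = (i, I(u,l), \nu) \in U$, so that $g_k$ coincides with the weighted rank function of the $k$-uniform matroid on $U$ with weights $(w_e)_{e \in U}$.

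First, I would lift $\x^*$ to a vector $\y^* \in [0,1]^U$ by setting $y^*_e := x^*_{i,u,l}$ for every $e = (i, I(u,l), \nu) \in U$. Using that $\beta_{I(u,l)}(\nu) = \bullet$ on exactly the $-l$ states $\{l+1, \ldots, -1\} \cup \{u\}$, the definition of the aggregated payoff immediately gives
$$\sum_{e \in U} w_e \, y^*_e \;=\; \sum_{i \in \A} \sum_{u,l} q_i(u,l) \cdot x^*_{i,u,l} \;=\; \lp^*.$$

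Second, I would verify that $\y^*$ lies in the independence polytope of the $k$-uniform matroid on $U$. The coordinate bounds $0 \leq y^*_e \leq 1$ follow from \eqref{lp:arm} together with $u - l \geq 2$, while the rank inequality
$$\sum_{e \in U} y^*_e \;=\; \sum_{i \in \A} \sum_{u,l} (-l) \cdot x^*_{i,u,l} \;\leq\; k$$
is exactly constraint \eqref{lp:total}, after observing that each $I(u,l)$ contributes precisely $-l$ play-triples to $U$. Since this polytope is the convex hull of indicator vectors of sets of size at most $k$, there exist coefficients $\lambda_S \geq 0$ with $\sum_S \lambda_S = 1$, each supported on some $S$ with $|S| \leq k$, such that $\y^* = \sum_S \lambda_S \mathbf{1}_S$.

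Finally, on any set $S$ with $|S| \leq k$ the weighted rank $g_k$ is linear, $g_k(S) = \sum_{e \in S} w_e$, and plugging the convex decomposition of $\y^*$ into the definition of the concave closure exhibits a distribution with marginals $\y^*$ that witnesses
$$g^+_k(\y^*) \;\geq\; \sum_S \lambda_S \, g_k(S) \;=\; \sum_S \lambda_S \sum_{e \in S} w_e \;=\; \sum_{e \in U} w_e \, y^*_e \;=\; \lp^*.$$
The only step that requires care is the ``sum of marginals'' identity, because many triples in $U$ share the same LP coordinate $x^*_{i,u,l}$; once one observes that each pair $(u,l)$ contributes exactly $-l$ triples to $U$ (one per play-state of $I(u,l)$), the identification with \eqref{lp:total} is automatic and the matroid polytope argument closes the proof.
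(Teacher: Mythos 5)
Your proof is correct and follows essentially the same route as the paper's: both exhibit a distribution with marginals $x^*_{i,u,l}$ supported on sets of at most $k$ triples, obtained from an integral convex decomposition of a point in the $k$-uniform matroid polytope, on which $g_k$ is linear. The only cosmetic difference is that you decompose the lifted vector $\y^*$ directly in the triple space $[0,1]^U$, whereas the paper decomposes the aggregated vector $\left(-l\, x^*_{i,u,l}\right)_{i,u,l}$ into boolean vectors with at most $k$ ones and then disaggregates each selected interval by choosing one of its $-l$ play-states uniformly at random; both constructions realize the same marginals and the same value $\lp^*$.
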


By combining the above elements, we can now complete the proof of \Cref{thm:main} (see \Cref{apx:approx}).

\section{Online Learning Adaptation with Sublinear Regret}
\label{sec:learning}

We now consider the online learning variant of $k$-$\mlsd$ where the mean payoff functions are unknown and need to be learned online. In particular, the payoff from playing arm $i$ at state $\tau$ is now drawn independently from a distribution with unknown mean $p_i(\tau)$ and bounded values in $[0,1]$. We assume that the player observes the payoff realization of each individual arm played (semi-bandit feedback). The goal is to design a bandit algorithm with sublinear regret measured against $(1-\epsilon)\cdot \gamma_k \cdot \opt(T)$, namely, the total expected payoff guaranteed (asymptotically) by \Cref{algo}. More formally, we seek to minimize the following approximate regret: $$\text{Reg}(T) = (1-\epsilon) \cdot \gamma_k \cdot  \opt(T) - R(T),$$ where $R(T)$ is the total expected payoff collected by the online learning algorithm over the whole time horizon. 

We develop a bandit adaptation of \Cref{algo} based on Explore-then-Commit, assuming that the time horizon is known. By a standard application of the doubling trick, this assumption only comes at a cost of an additional polylogarithmic factor in the regret. Our main theorem is summarized below and its proof can be found in \Cref{apx:learning}:

\begin{restatable}{theorem}{restateregret} \label{thm:regret}
There exists a bandit adaptation of \Cref{algo} for $k$-$\mlsd$ with regret upper-bounded as
$$
\mathcal{O}\left(n^{\frac{1}{3}}k^{\frac{2}{3}}\left((\tau^{\max})^2+\frac{1}{\epsilon}\right)^{\frac{1}{3}}\ln^{\frac{1}{3}}\left(\left(\tau^{\max}+\frac{1}{\epsilon}\right)T\right) \cdot T^{\frac{2}{3}} + k \tau^{\max} + n\right).
$$
\end{restatable}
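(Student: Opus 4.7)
The plan is to design an Explore-then-Commit (ETC) algorithm: first spend $T_{\text{exp}}$ rounds estimating each mean $p_i(\tau)$ up to accuracy $\eta$, and then execute \Cref{algo} for the remaining $T - T_{\text{exp}}$ rounds with \eqref{lp:LP} instantiated from the empirical estimates $\hat{p}_i$. The sample budget $N$ per arm-state pair will ultimately be tuned to balance exploration cost against the exploitation error; a standard doubling trick, costing only a polylogarithmic factor absorbed in $\mathcal{O}(\cdot)$, removes the assumption that $T$ is known.

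\textbf{Exploration and concentration.} We partition $\A$ into $\lceil n/k \rceil$ groups of at most $k$ arms and, within each group, run a sampling cycle that produces one observation per arm per state in $\states_{\tau^L}^{\tau^{\max}}$. Observing an arm at a positive state $\tau$ forces us to keep it idle for $\tau-1$ rounds and then play it once (after which it resets to $-1$), so sweeping positive states $1,\ldots,\tau^{\max}$ costs $\Theta((\tau^{\max})^2)$ rounds; the negative states $-1,\ldots,\tau^L$ are swept by a single contiguous run of plays of length $\lceil 1/\epsilon\rceil$. One full cycle therefore occupies $Q:=\Theta((\tau^{\max})^2 + 1/\epsilon)$ rounds, and repeating it $N$ times across all groups gives $T_{\text{exp}}=\Theta((n/k)\cdot Q \cdot N)$. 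Hoeffding's inequality with a union bound over the $n\cdot(\tau^{\max}+1/\epsilon)$ arm-state pairs guarantees that, with probability at least $1-1/T$, the uniform bound $|\hat{p}_i(\tau)-p_i(\tau)|\le \eta := \Theta(\sqrt{\ln((\tau^{\max}+1/\epsilon)T)/N})$ holds; the complementary failure event contributes only $\mathcal{O}(1)$ to the regret.

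\textbf{LP robustness and per-round exploitation bound.} An $\eta$-perturbation of the payoffs changes each aggregated payoff $q_i(u,l)$ by at most $\eta|l|$, so constraint \eqref{lp:total} implies that the LP objective evaluated at any feasible $\x$ shifts by at most $\eta k$. Hence the empirical optimizer $\hat{\x}^*$ attains value at least $\lp^*-2\eta k$ under the true payoffs. Crucially, all the structural ingredients of the analysis in \Cref{sec:analysis} -- the block-mutually-exclusive form of the candidate-triple distribution (\Cref{lem:distribution}), the correlation-gap estimate (\Cref{lem:submodulargap}), and the inequality $g_k^+(\x)\ge$ (LP value at $\x$) (\Cref{lem:relationtolp}) -- rely on the sampling procedure and on the feasibility of the input solution, not on its optimality. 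They therefore apply verbatim to $\hat{\x}^*$, yielding an expected per-round exploitation payoff of at least $\gamma_k\lp^*-2\gamma_k\eta k$. Summing over the $T-T_{\text{exp}}$ exploitation rounds and invoking \Cref{lem:relaxation} with $\tau^L=-\lceil 1/\epsilon\rceil$ gives
$$
\text{Reg}(T)\;\le\;\mathcal{O}\bigl(T_{\text{exp}}\cdot k + T\cdot\eta k + n + \tau^{\max} k\bigr),
$$
where the $T_{\text{exp}}\cdot k$ term absorbs up to $k$ units of payoff lost per exploration round (using also $\lp^*\le k$, which follows by combining the objective with \eqref{lp:total}).

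\textbf{Tuning and anticipated obstacles.} Substituting $T_{\text{exp}}=\Theta((n/k)QN)$ and $\eta=\Theta(\sqrt{\ln/N})$ and balancing $T_{\text{exp}}\cdot k = T\cdot\eta k$ selects $N=\Theta((Tk/(nQ))^{2/3}\ln^{1/3}((\tau^{\max}+1/\epsilon)T))$, which delivers the target dominant term $n^{1/3}k^{2/3}Q^{1/3}T^{2/3}\ln^{1/3}$, matching the statement. The main subtleties I expect are two. First, confirming that the $\mathcal{O}(\eta k)$ LP-perturbation error propagates \emph{additively} rather than multiplicatively through the correlation-gap chain; this follows because $g_k$ and its concave closure $g_k^+$ are linear in the payoff parameters $p_i(\cdot)$, so the same argument simultaneously bounds both the objective-under-true-$p$ at $\hat{\x}^*$ and the gap $\hat{\lp}^*-\lp^*$. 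Second, arranging the exploration schedule so that the transitions between consecutive cycles, and between groups, do not drift an arm off its intended target state; a short reset buffer of $\mathcal{O}(\tau^{\max})$ rounds per group suffices and is absorbed into the additive $\mathcal{O}(\tau^{\max} k)$ term.
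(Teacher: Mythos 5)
Your proposal is correct and follows essentially the same route as the paper: an Explore-then-Commit scheme with a grouped state-sweeping exploration schedule costing $\Theta\left((n/k)((\tau^{\max})^2+1/\epsilon)N\right)$ rounds, a Hoeffding-plus-union-bound concentration step with failure probability $1/T$, an additive $\mathcal{O}(\eta k)$ LP-perturbation bound propagated through the correlation-gap chain (which, as you note, needs only feasibility of the input solution), and the same balancing of exploration against exploitation error. The tuning of $N$ and the resulting $n^{1/3}k^{2/3}Q^{1/3}T^{2/3}\ln^{1/3}$ dominant term match the paper's Lemmas on robustness and sample complexity exactly.
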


\section*{Conclusion and Further Directions}
In this work, we consider the model of last switch dependent bandits, recently introduced in \cite{LCCBGB21}. We provide the first polynomial-time constant approximation algorithm for the planning problem under the additional assumption of monotonic payoffs, which already generalizes several studied models \cite{BSSS19,SLZZ21,papadigenopoulos2022nonstationary}. Our algorithm relies on novel techniques and insights that might be of independent interest including a novel relaxation, the concept of recurrent intervals, and the technique of mirroring virtual evolutions. This work leaves a number of interesting future directions: (a) we conjecture that the monotonicity of the payoff functions is necessary for the problem to accept constant approximations. Thus, an immediate direction would be to provide inapproximability results for the problem in the absence of this assumption. (b) The LP-based and time-correlated nature of our algorithm significantly complicates its adaptation into a {\em dynamic} online learning policy (e.g., based on UCB \cite{ACBF02}). Providing a learning adaptation of improved (order-optimal) regret is, hence, another interesting direction. Finally, (c) we believe that the ``mirroring'' technique we develop in this work could be used for tackling even richer subclasses of {\em restless} bandits which satisfy some analogous form of monotonicity.

\bibliographystyle{alpha}
\bibliography{ref.bib}
\newpage

\onecolumn

\appendix

\section{Background and Related Work} \label{apx:bibl}
In their original work on recharging bandits, Immorlica and Kleinberg \cite{KI18} construct a $(1-\epsilon)$-PTAS for the setting where -- in addition to monotone non-decreasing -- the payoffs are concave and Lipschitz functions of the delay. This is achieved through an elegant combination of partial enumeration and the novel technique of (randomized time-correlated) {\em interleaved rounding} of a concave relaxation. More related to our work, Simchi-Levi et al. \cite{SLZZ21} drop the assumptions of concavity and Lipschitzness and replace it with the conceptually weaker assumption of finite recovery (i.e., after a specific delay threshold the payoffs stabilize). They provide a $\nicefrac{1}{4}$-approximation algorithm by defining and searching over the space of {\em purely periodic policies}; the guarantees of their algorithm improve when more than one arms can be played per round. Additional variations of the problem have been studied \cite{CCB19,PBG19} with the special case of {\em blocking bandits} receiving particular attention \cite{BSSS19,BCMT20,APBCS21,PC21,BPCS20}. In blocking bandits each arm has a fixed mean payoff, yet becomes unavailable for a known number of rounds after each play. 

The state-of-the-art planning algorithm for recharging bandits (under the assumptions made in \cite{SLZZ21}) is due to Papadigenopoulos et al. \cite{papadigenopoulos2022nonstationary}. In their work, the authors construct a $\left(1 - \nicefrac{k^k}{e^k k!}\right)$-approximation for the case where at most $k$ arms can be pulled at each round. Their algorithm is based on rounding the solution of a natural linear-programming (LP) relaxation of the problem, through a combination of randomized rounding and the technique of {\em interleaved scheduling}. The idea of the algorithm we develop in this paper is based on abstracting and extending the above techniques.

Beyond introducing the model of LSD bandits, Laforgue et al. \cite{LCCBGB21} provide an additive approximation to the optimal solution with sublinear regret, for the case where at most one arm can be played per round. At a high-level, their approach is based on partitioning the time horizon into blocks of reasonable size and then finding an optimal arm-playing sequence within each block. However, due to the interaction between consecutive blocks (through the arms' states), one has to increase the size of each block in order to shrink the (additive) error, which makes the computation of this optimal sequence computationally hard.

\section{LP Relaxation Based on Recurrent Intervals: Omitted Proofs}

\label{apx:relaxation}

\restatenearopt*
\begin{proof}
Consider an instance $I$ of $k$-$\mlsd$. A deterministic solution $\pi(I)$ for $I$ consists of a schedule of plays and non-plays of each arm $i$ over the time horizon $T$, such that the total number of plays at each time step is at most $k$. We denote by $\pi_i(I)$ the sequence of plays and non-plays of arm $i$ under the solution $\pi(I)$. Note that $I$ always has an optimal solution that is deterministic, and let $\pi^*(I)$ be one such solution. 

Consider now a feasible solution $\Bar{\pi}(I)$ such that for every arm $i$, the sequence $\Bar{\pi}_i(I)$ is produced using $\pi^*_i(I)$ as follows: starting from $t=1$, each time we encounter a sequence of $1-\tau^L$ consecutive plays, we omit the $(1-\tau^L)$-th play (i.e., we turn it to a non-play). Further, we omit the last play of $\pi^*_i(I)$, if such a play exists. Clearly, $\Bar{\pi}(I)$ is a feasible solution, since the total number of played arms at each time step remains at most $k$. 

By construction of $\Bar{\pi}(I)$, for every arm $i$ the sequence $\Bar{\pi}_i(I)$ consists of a concatenation of recurrent intervals $I(u, l)$ such that $l \geq \tau^L$, potentially followed by a sequence of non-plays. In fact, any other recurrent interval $I(u, l)$ with $l < \tau^L$ must contain a sequence of $1-\tau^L$ consecutive plays for arm $i$ and, thus, cannot be contained in $\Bar{\pi}_i(I)$. Further, $\Bar{\pi}_i(I)$ ends with a non-play which implies that it can be split into recurrent intervals starting from $t=1$ and ending each one when a play/non-play switch is encountered. To conclude the proof, we show that $\Bar{\pi}(I)$ has an expected payoff
$$
\overline{\opt}(T) \geq (1 - \frac{1}{1-\tau^L}) \cdot \opt(T) - n.
$$
Fix an arm $i$ and let $\opt_i(T)$ denote the payoff that the optimal solution collects from arm $i$. Let $t_1, \ldots, t_R, t_{R+1}$ denote the time steps of the omitted plays from $\pi^*_i(I)$ with $t_{R+1}$ being the time step of the last play of $\pi^*_i(I)$ (assuming one exists). Let $\Delta_i = \sum_{r=1}^{R+1} p_i(\tau^*_i(t_r))$ denote the total payoff of the omitted steps, where $\tau_i^*(t)$ is the state of arm $i$ at time $t$ in schedule $\pi^*_i(I)$. Apart from the play $t_{R+1}$, every other omitted play is preceded by $-\tau^L$ plays in $\pi^*_i(I)$. By monotonicity of the payoff functions, for every $r \in \{1, \ldots, R\}$ we have 
$$p_i(\tau_i^*(t_r)) \leq \frac{1}{-\tau^L} \sum_{t=t_r+\tau^L}^{t_r-1} p_i(\tau_i^*(t)).
$$ 
By construction, the sets of time steps $\{t_1+\tau^L, \ldots, t_1-1\}, \ldots, \{t_R+\tau^L, \ldots, t_R-1\}$ which are the sets of $-\tau^L$ time steps preceding $t_1, \ldots, t_R$ respectively are disjoint. Hence
\begin{align*}
    \Delta_i = \sum_{r=1}^{R+1} p_i(\tau_i^*(t_r)) \leq  \frac{1}{-\tau^L} \sum_{r=1}^{R} \sum_{t=t_r+\tau^L}^{t_r-1} p_i(\tau_i^*(t)) + 1 \leq  \frac{1}{-\tau^L} (\opt_i(T) -\Delta_i) + 1,
\end{align*}
which implies that
\begin{align*}
    \Delta_i \leq \frac{1}{1-\tau^L}\opt_i(T) + \frac{-\tau_L}{1-\tau^{L}} \leq \frac{1}{1-\tau^L}\opt_i(T) + 1.
\end{align*}
Finally, since omitting a play can only make the state of the arm (and hence the payoff) higher in subsequent rounds, the gain from following the schedule $\Bar{\pi}_i(I)$ is at least $\opt(T)-\sum_{i=1}^n \Delta_i$. Hence
\begin{align*}
    \overline{\opt}(T) 
    &\geq 
    \opt(T) - \sum_{i=1}^n \Delta_i \\
    &\geq \opt(T) - \frac{1}{1-\tau^L} \sum_{i=1}^n \opt_i(T) - n\\
    &= (1 - \frac{1}{1-\tau^L}) \opt(T) - n.
\end{align*}
\end{proof}

\restaterelaxation*
\begin{proof}
Consider an instance $I$ of $k$-$\mlsd$ and let $\Bar{\pi}(I)$ be some deterministic solution such that, for every arm $i$, the schedule $\Bar{\pi}_i(I)$ consists of a concatenation of only recurrent intervals $I(u, l)$ with $l \geq \tau^L$, potentially followed by a sequence of non-plays. Let $\overline{\opt}(T)$ denote the total payoff of $\Bar{\pi}(I)$ over $T$ rounds. In order to prove the Lemma, it suffices to construct a feasible solution to \eqref{lp:LP} with objective value (at least) $\frac{\overline{\opt}(T)}{T}$; indeed, by \Cref{lem:nearopt}, this would imply that
$$ T \cdot \lp^* \geq \left(1 - \frac{1}{1-\tau^L}\right)\opt(T) - n.
$$
Let $\Bar{N}_i(u,l)$ be the number of times the recurrent interval $I(u,l)$ appears in the schedule $\Bar{\pi}_i(I)$ starting from $t=1$. Consider $\Bar{\x} \in \A\times \states_{1}^{\tau^{\max}} \times \states_{\tau^{L}}^{-1}$ such that for every $i\in \A$, $u \in \states_{1}^{\tau^{\max}}$ and $l \in \states_{\tau^{L}}^{-1}$ we have
    \begin{align*}
        \Bar{x}_{i,u,l} := \left\{
	\begin{array}{ll}
		\frac{\Bar{N}_i(u,l)}{T} & u < \tau^{\max}\\
		\sum_{u' \geq u} \frac{\Bar{N}_i(u',l)}{T}  & u = \tau^{\max}
	\end{array}.
        \right.
    \end{align*}
    Since an arm is pulled exactly $-l$ times during an interval $I(u,l)$, the total number of plays of arm $i$ under the solution $\Bar{\pi}(I)$ is 
    $\sum_{u > 0}\sum_{l \in \states_{\tau^L}^{-1}} -l \cdot \Bar{N}_i(u,l)$
    and, hence, the total number of plays of all arms over the whole time horizon satisfies 
    $$\sum_{i \in \A} \sum_{u > 0}\sum_{l \in \states_{\tau^L}^{-1}} -l \cdot \Bar{N}_i(u,l) \leq k T,$$
    by feasibility of $\Bar{\pi}(I)$. 
    Therefore, we have that
    \begin{align*}
        \sum_{i \in \A}\sum_{u \in \states^{\tau^{\max}}_{1}}\sum_{l \in \states^{-1}_{\tau^L}} -l \cdot \Bar{x}_{i,u,l} 
        &= \sum_{i \in \A}\sum_{u \in \states^{\tau^{\max}-1}_{1}}\sum_{l \in \states^{-1}_{\tau^L}} -l \cdot  \frac{\Bar{N}_i(u,l)}{T} + \sum_{i \in \A}\sum_{l \in \states^{-1}_{\tau^L}} \sum_{u \geq \tau^{\max}} -l \cdot  \frac{\Bar{N}_i(u,l)}{T}\\
        &=\frac{1}{T}\sum_{i \in \A} \sum_{u > 0}\sum_{l \in \states_{\tau^L}^{-1}} -l \cdot \Bar{N}_i(u,l) \leq k,
    \end{align*}
    which implies the feasibility of $\Bar{\x}$ w.r.t. constraints \eqref{lp:total}. 
    
    Notice that the length of the recurrent interval $I(u,l)$ is $\|I\|_{ri} = u - l$ and, thus, for every arm $i$ the total number of steps occupied by all the recurrent intervals in $\Bar{\pi}_i(I)$ is given by $\sum_{u>0}\sum_{l \in \states_{\tau^L}^{-1}} (u-l) \Bar{N}_i(u,l),$ which is less or equal to $T$ by construction. Therefore, for every arm $i$, we have
    \begin{align*}
        \sum_{u \in \states^{\tau^{\max}}_{1}}\sum_{l \in \states^{-1}_{\tau^L}} (u-l) \cdot \Bar{x}_{i,u,l} 
        &= \sum_{u \in \states^{\tau^{\max}-1}_{1}}\sum_{l \in \states^{-1}_{\tau^L}} (u-l) \cdot  \frac{\Bar{N}_i(u,l)}{T} + \sum_{l \in \states^{-1}_{\tau^L}} \sum_{u \geq \tau^{\max}} (\tau^{\max}-l) \cdot  \frac{\Bar{N}_i(u,l)}{T}\\
        &\leq \sum_{u \in \states^{\tau^{\max}-1}_{1}}\sum_{l \in \states^{-1}_{\tau^L}} (u-l) \cdot  \frac{\Bar{N}_i(u,l)}{T} + \sum_{l \in \states^{-1}_{\tau^L}} \sum_{u \geq \tau^{\max}} (u-l) \cdot  \frac{\Bar{N}_i(u,l)}{T}\\
        &=\frac{1}{T}\sum_{u > 0}\sum_{l \in \states_{\tau^L}^{-1}} (u-l) \cdot \Bar{N}_i(u,l) \leq 1,
    \end{align*}
    which implies the feasibility of $\Bar{\x}$ w.r.t. constraints \eqref{lp:arm}. 
    
    Finally, since the aggregated payoff of the recurrent interval $I(u,l)$ for any arm $i$ is given by $q_i(u,l)$, the total collected payoff from all arms is 
    $$\overline{\opt}(T) = \sum_{i \in \A} \sum_{u>0}\sum_{l \in \states_{\tau^L}^{-1}} q_i(u,l) \Bar{N}_i(u,l).$$
    In order to conclude the proof, we notice that the objective value of $\Bar{x}$ in \eqref{lp:LP} becomes
    \begin{align*}
        \sum_{i \in \A}\sum_{u \in \states^{\tau^{\max}}_{1}}\sum_{l \in \states^{-1}_{\tau^L}} q_i(u,l) \cdot \Bar{x}_{i,u,l} 
        &= \sum_{i \in \A}\sum_{u \in \states^{\tau^{\max}-1}_{1}}\sum_{l \in \states^{-1}_{\tau^L}} q_i(u,l)  \frac{\Bar{N}_i(u,l)}{T} + \sum_{i \in \A}\sum_{l \in \states^{-1}_{\tau^L}} \sum_{u \geq \tau^{\max}} q_i(\tau^{\max},l)  \frac{\Bar{N}_i(u,l)}{T}\\
        &= \sum_{i \in \A}\sum_{u \in \states^{\tau^{\max}-1}_{1}}\sum_{l \in \states^{-1}_{\tau^L}} q_i(u,l) \frac{\Bar{N}_i(u,l)}{T} + \sum_{i \in \A}\sum_{l \in \states^{-1}_{\tau^L}} \sum_{u \geq \tau^{\max}} q_i(u,l) \frac{\Bar{N}_i(u,l)}{T}\\
        &=\frac{1}{T}\sum_{i \in \A} \sum_{u > 0}\sum_{l \in \states_{\tau^L}^{-1}} q_i(u,l) \Bar{N}_i(u,l)\\
        &= \frac{\overline{\opt}(T)}{T},
    \end{align*}
where the second equality above follows by the finite recovery assumption. 
\end{proof}

\section{Analysis of the Approximation Guarantee: Omitted Proofs} \label{apx:approx}

\restatecomparestate*

\begin{proof}
Let us fix any arm $i \in \A'$, a sampled recurrent interval $I_i = I(u_i,l_i)$, and a sampled offset $r_i$. Recall that arm $i$ is a candidate at any round $t$ (and, hence, can potentially be played) if and only if its virtual state satisfies $\beta_{I_i}(\nu_i(t)) = \bullet$. Recall, also, that the initial state of any arm (including $i$) is $1$, namely, $\tau_i(1) = 1$. We now prove by induction that for any round $t \geq \tau^{\max}$, it holds $\tau_i(t) \geq \nu_i(t)$. In particular, we prove the statement for any round $t \geq t_0$, where $t_0$ (defined below) satisfies $t_0 \leq \tau^{\max}$. 

For the base case, we first notice that if $\tau^{\max} = 1$, then by setting $t_0 = \tau^{\max} = 1$ we get that $\tau_i(t_0) = \tau_i(1) = 1 \geq \nu_i(t_0)$, by our assumption that all arms are initialized at state $1$. Let us now assume that $\tau^{\max} \geq 2$ and distinguish between two cases: (i) in the case where $\nu_i(1) \leq 1$ (i.e., the virtual state at round $t=1$ is either negative or $1$), by setting $t_0 = 1 \leq \tau^{\max}$ we immediately get that $\tau_i(t_0) = 1 > \nu_i(t_0)$, by assumption. (ii) In the case where $\nu_i(1) \geq 2$, then notice that the first time where $\beta_{I_i}(\nu_i(t')) = \bullet$ (and, hence, arm $i$ becomes a candidate) happens at $t' = u_i - \nu_i(1) + 1 \leq \tau^{\max} - 1$, by definition of a recurrent interval. Thus, by setting $t_0 = t' + 1 \leq \tau^{\max}$, we have that $\tau_i(t_0)$ is either $u_i+1$ or $-1$ (depending on whether or not it is played at round $t'$); in both cases, it holds $\tau_i(t_0) \geq -1 = \nu_i(t_0)$. Thus, we have established that $\tau_i(t_0) \geq \nu_i(t_0)$ for some round $t_0 \leq \tau^{\max}$.

Let us now assume that $\tau_i(t) \geq \nu_i(t)$ for some round $t \geq \tau^{\max}$ and prove the inductive step for round $t+1$. Clearly, in the case where $-l_i < \nu_i(t) < 0$ (i.e., the negative state corresponds to a consecutive play), then $\nu_i(t+1) = \nu_i(t) - 1$ and the actual state either ends up at $\tau_i(t+1) = \tau_i(t) - 1 \geq \nu_i(t) - 1 = \nu_i(t + 1)$ if $\tau_i(t) < 0$, or at $\tau_i(t+1) = - 1$ if $\tau_i(t) > 0$. In the case where $\nu_i(t) = -l_i$ (and, hence, the arm is not played at $t$), then it holds that $\nu_i(t+1) = 1$ and the actual state either ends up at $\tau_i(t+1) = 1 = \nu_i(t + 1)$ if $\tau_i(t) < 0$, or at $\tau_i(t+1) = \tau_i(t) + 1 \geq 1 = \nu_i(t + 1)$ if $\tau_i(t) > 0$. In the case where $0 < \nu_i(t) < u_i$ then, clearly, $\nu_i(t+1) = \nu_i(t) + 1 \leq \tau_i(t) + 1 = \tau_i(t+1)$, since the arm does not become a candidate at round $t$. Finally, for the case where $\nu_i(t) = u_i$, then it must be that either $\nu_i(t+1) = \tau_i(t+1) = -1$ if arm is played at round $t$, or $\tau_i(t+1) > \tau_i(t) \geq u_i > -1 = \nu_i(t+1)$, otherwise.
\end{proof}

\restatedistribution*
\begin{proof}
We first notice that, since the sampling of the recurrent intervals and offsets is performed independently for each arm, the random sets $\mathcal{T}^1_t, \ldots, \mathcal{T}^n_t$ must be independent. Further, by definition of our algorithm, for every $(i, I(u,l), \nu) \in U^i$, we have that
\begin{align*}
\Prob{}{\mathcal{T}^i_t = \{(i, I(u,l), \nu)\}} = \Prob{}{\nu_i(t)=\nu \;|\; I_i=I(u,l)} \cdot \Prob{}{I_i=I(u,l)} = \frac{1}{u-l} \cdot (u-l) x^*_{i, u, l} = x^*_{i, u, l},
\end{align*}
and also
\begin{align*}
\Prob{}{\mathcal{T}^i_t = \emptyset} = 1-\sum_{(i,I(u,l), \tau) \in U^i} x^*_{i, u, l}.
\end{align*}
Note that for each arm $i \in [n]$, the distribution of the set $\mathcal{T}^i_t$ is identical for every $t$, and let us denote it by $\mathcal{C}^i(\x^*)$. Given the above, the distribution of $\mathcal{T}_t$ denoted by $\mathcal{C}(\x^*)$ is independent of the time step $t$ and is equivalent to sampling $n$ sets from the independent distributions $\mathcal{C}^1(\x^*), \ldots, \mathcal{C}^n(\x^*)$ and taking their union; this satisfies the definition of a block-mutually-exclusive distribution.
\end{proof}

\restatesubmodgap*
\begin{proof}
Let $V^1, \ldots, V^n$ be the partition of $[m]$ defining the blocks of $\mathcal{C}(\y)$. Let $\mathcal{C}^1(\y), \ldots, \mathcal{C}^n(\y)$ be the block-marginals of $\mathcal{C}(\y)$ where for every $i \in [n]$, $\mathcal{C}^i(\y)$ is defined over the singletons of $2^{V_i}$ and the empty set. Recall that sampling $S \sim \mathcal{C}(\y)$ is equivalent to sampling independently $n$ sets $S^1, \ldots, S^n$ from the distributions $\mathcal{C}^1(\y), \ldots, \mathcal{C}^n(\y)$, respectively, and taking their union. Similarly, sampling $S \sim \mathcal{I}(\y)$ is equivalent to sampling independently $n$ sets $S^1, \ldots, S^n$ from the distributions $\mathcal{I}^1(\y), \ldots, \mathcal{I}^n(\y)$, respectively, and taking their union, where for every $i\in [n]$, $\mathcal{I}^i(\y)$ is the element-wise independent distribution over $V^i$ with marginals $\Prob{S \sim \mathcal{I}^i(\y)}{j \in S} = y_{j}$ for every $j \in V^i$. 

Consider now $n$ sets $S^1, \dots, S^n$ randomly sampled from $n$ independent distributions $\mathcal{D}^1, \dots, \mathcal{D}^n$ such that for each $i \in [n]$, the distribution $\mathcal{D}^i$ is either the mutually exclusive distribution $\mathcal{C}^i(\y)$ or the independent one $\mathcal{I}^i(\y)$. Consider $i \in [n]$ such that $\mathcal{D}^i \sim \mathcal{C}^i(\y)$ (if one exists), we show that switching $\mathcal{D}^i$ to $\mathcal{I}^i(\y)$ decreases the expected value of $f(\cup_{j=1}^n S^j)$. This implies the lemma by iteratively switching the mutually exclusive distributions to independent ones starting from $\mathcal{D}^1 \sim \mathcal{C}^1(\y), \dots, \mathcal{D}^n \sim \mathcal{C}^n(\y)$. Note that it is sufficient to show that for every fixed values of the rest of the sets $S^j$ for $j \neq i$, it holds that 
$$\Ex{S \sim \mathcal{C}^i(\y)}{f(\cup_{j=1}^n S^j)}
    \geq
    \Ex{S \sim \mathcal{I}^i(\y)}{f(\cup_{j=1}^n S^j)}.$$

Let us begin by proving the following result on subadditive functions\footnote{A set function $g : 2^V \rightarrow \mathbb{R}$ is subadditive if and only if for every $S, T \subseteq V$, we have $g(S \cup T) \leq g(S) + g(T)$.}. 

\begin{lemma}\label{lem:subadditive}
Let $g : 2^{V} \rightarrow \mathbb{R}$ be a subadditive function over a ground set $V$ such that $g(\emptyset) = 0$. Consider a distribution $\mathcal{D}$ over the singletons of $2^V$ and the empty set, and let $\mathcal{I}$ be the element-wise independent distribution over $V$ with same marginals, i.e., such that $\Prob{S \sim \mathcal{I}}{e \in S} = \Prob{\mathcal{D}}{\{e\}}$ for every $e \in V$. Then,
        \begin{align*}
            \Ex{S \sim \mathcal{D}}{g(S)}
            \geq
            \Ex{S \sim \mathcal{I}}{g(S)}.
        \end{align*}
\end{lemma}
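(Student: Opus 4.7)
The plan is to directly compute both expectations in closed form and then use subadditivity as the only substantive step. Denote $p_e := \Pr_{S \sim \mathcal{D}}(S = \{e\})$ for each $e \in V$; by assumption these are the marginals of $\mathcal{I}$ as well, i.e., $\Pr_{S \sim \mathcal{I}}(e \in S) = p_e$.

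First I would evaluate the left-hand side exactly. Since $\mathcal{D}$ is supported on singletons and $\emptyset$, and $g(\emptyset) = 0$,
\begin{align*}
\Ex{S \sim \mathcal{D}}{g(S)} \;=\; \sum_{e \in V} p_e \, g(\{e\}) + \Bigl(1 - \sum_{e \in V} p_e\Bigr) g(\emptyset) \;=\; \sum_{e \in V} p_e \, g(\{e\}).
\end{align*}

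Next I would bound the right-hand side. By iterating the subadditivity inequality $g(A \cup B) \leq g(A) + g(B)$ and using $g(\emptyset) = 0$, for any $S \subseteq V$ we have $g(S) \leq \sum_{e \in S} g(\{e\})$. Taking expectation under $\mathcal{I}$ and interchanging sum and expectation by linearity,
\begin{align*}
\Ex{S \sim \mathcal{I}}{g(S)} \;\leq\; \Ex{S \sim \mathcal{I}}{\sum_{e \in S} g(\{e\})} \;=\; \sum_{e \in V} \Prob{S \sim \mathcal{I}}{e \in S} \cdot g(\{e\}) \;=\; \sum_{e \in V} p_e \, g(\{e\}).
\end{align*}
Combining the two displays yields $\Ex{S \sim \mathcal{D}}{g(S)} \geq \Ex{S \sim \mathcal{I}}{g(S)}$, which is the claim.

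There is no real obstacle here: the lemma is essentially a one-line consequence of subadditivity, with the key observation being that the mutually-exclusive sampling achieves equality in the subadditivity bound (since it never picks more than one element), whereas the independent sampling loses whenever two or more elements are simultaneously selected. This matches the intuition later invoked in \Cref{lem:submodulargap}, where the same comparison will be used on a block-by-block basis after noting that the restriction of a submodular function to sets supported on a single block is in particular subadditive.
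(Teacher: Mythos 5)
Your proof is correct and follows essentially the same route as the paper's: both use subadditivity together with $g(\emptyset)=0$ to get the pointwise bound $g(S) \leq \sum_{e \in S} g(\{e\})$, take expectations under $\mathcal{I}$ by linearity, and identify the resulting sum $\sum_{e} p_e\, g(\{e\})$ with $\Ex{S \sim \mathcal{D}}{g(S)}$ via the matching marginals. Your version merely makes the exact evaluation of the left-hand side more explicit; there is no substantive difference.
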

\begin{proof}
Notice that since $g(\emptyset) = 0$, then for any set $S \subseteq V$, the subadditivity of $g$ implies that

$$
g(S) \leq \sum_{e \in S} g(\{e\}) = \sum_{e \in V} g(\{e\}) \cdot \event{e \in S},
$$
where $\event{E}$ is the indicator function such that $\event{E} = 1$, if $E$ holds true, and $\event{E} = 0$, otherwise. By taking expectation over $\mathcal{I}$ on the above expression and using the fact that $\mathcal{I}$ and $\mathcal{D}$ have the same marginals, we can conclude that
$$
\Ex{S \sim \mathcal{I}}{g(S)} \leq \sum_{e \in V} g(\{e\}) \cdot \Prob{S \sim \mathcal{I}}{e \in S} = \Ex{S \sim \mathcal{D}}{g(S)}.
$$
\end{proof}
Now for every $A \subseteq U$, let us denote $f(\cdot|A): 2^U \rightarrow R$ such that
$$
\quad f(S \mid A) = f(S \cup A) - f(A) \qquad \forall S \subseteq U, 
$$ denote the marginal function of $f$ and note that and $f(\emptyset|A) = 0$. It is known that since $f$ is submodular then $f(\cdot|A)$ has to be subadditive. Hence, for every fixed values of $S^j$ for $j \neq i$, by \Cref{lem:subadditive}, it holds that 
$$
    \Ex{S \sim \mathcal{C}^i(\y)}{f(S^i | \cup_{j \neq i} S^j)}
    \geq
    \Ex{S \sim \mathcal{I}^i(\y)}{f(S^i | \cup_{j \neq i} S^j)}.
    $$ which implies that $$\Ex{S \sim \mathcal{C}^i(\y)}{f(\cup_{j=1}^n S^j)}
    \geq
    \Ex{S \sim \mathcal{I}^i(\y)}{f(\cup_{j=1}^n S^j)},$$ which concludes the proof.
\end{proof}

\restatesuptolp*
\begin{proof}
We prove the lemma by explicitly constructing a distribution $\mathcal{L}(\x^*)$ under which the expectation of $g(C)$ is equal to $\lp^*$. In particular, consider the vector 
$$\mathbf{v} := \left(-lx^*_{i,u, l}\right)_{i \in \mathcal{A} \;,\; u \in \states^{ \tau^{\max}}_1 \;,\; l \in \states^{-1}_{\tau^L}}.$$ 
Constraints \eqref{lp:total} of \eqref{lp:LP} imply that the sum of the coordinates of $\mathbf{v}$ is at most $k$, while constraints \eqref{lp:arm} suggest that $\mathbf{v}$ lies in the unit hypercube. Therefore, vector $\mathbf{v}$ may be written as a convex combination of $m$ boolean vectors $\mathbf{v}^1, \mathbf{v}^2, \ldots, \mathbf{v}^m \in \{0,1\}^{n \times \tau^{max} \times (-\tau^{L})}$ such that for every $j \in [m]$ the vector $\mathbf{v}^j$ has at most $k$ ones. Let $\lambda_1, \ldots, \lambda_m \geq 0$ such that $\sum_{j=1}^m \lambda_j = 1$ and $\mathbf{v} = \sum_{j=1}^m \lambda_j \mathbf{v}^j$. We construct the following distribution $\mathcal{L}(\x^*)$ over $2^U$: to sample a set $S \sim \mathcal{L}(\x^*)$, we first sample a vector $\mathbf{w} \in \{\mathbf{v}^1, \ldots, \mathbf{v}^m\}$ where for every $j \in [m]$, the probability of sampling $\mathbf{v}^j$ is $\lambda_j$. Next, for each positive coordinate $(i,u, l)$ of $\mathbf{w}$ (i.e., such that $w_{i,u, l} = 1$), we sample a triple $T_{i,u,l}$ from the set $\{(i, I(u, l), \tau) \;|\; \beta_{I(u,l)} = \bullet\}$ uniformly at random and add $T_{i,u,l}$ to $S$. Notice that the set $S$ is by construction a subset of $U$ (the set of all possible triples). Note, further, that because $\mathbf{w}$ has at most $k$ ones, $S$ contains at most $k$ triples. 

We now claim that the distribution $\mathcal{L}(\x^*)$ constructed above has marginals $(x^*_{i,u,l})_{i,u,l}$. Indeed, fix a triple $(i, I(u,l), \nu)$. The probability that $(i, I(u,l), \nu)$ belongs to $S$ is given by
\begin{align*}
    \Prob{S \sim \mathcal{L}(\x^*)}{(i, I(u,l), \nu) \in S} 
    &= 
    \sum_{j=1}^m \Prob{}{\left.(i, I(u,l), \nu) \in S \right| \mathbf{w} = \mathbf{v}^j} \Prob{}{\mathbf{w}=\mathbf{v}^j}\\
    &= 
    \sum_{j \in [m]:v^j_{i, u, l} = 1} \Prob{}{\left.(i, I(u,l), \nu) \in S \right| \mathbf{w} = \mathbf{v}^j} \Prob{}{\mathbf{w}=\mathbf{v}^j}\\
    &= 
    \sum_{j \in [m]:v^j_{i, u, l} = 1} \Prob{}{T_{i,u,l} = (i, I(u,l), \nu) | \mathbf{w} = \mathbf{v}^j} \cdot \lambda_j\\
    &= 
    \sum_{j \in [m]:v^j_{i, u, l} = 1} \frac{\lambda_j}{-l} \\
    &= -
    \frac{1}{l}\sum_{j=1}^m \lambda_j v^j_{i, u, l} \\
    &= x^*_{i, u, l}.
\end{align*}
In order to see why the second equality above holds, note that $S$ only contains triples $(i',I(u',l'),\tau')$ such that $w_{i',u',l'} = 1$. Hence, for every $j \in [m]$ such that $v^j_{i,u,l} = 0$ and conditioned on $\mathbf{w} = \mathbf{v}^j$, the triple $(i, I(u,l), \nu)$ does not belong to $S$ and, thus, $\Prob{}{\left(i, I(u,l), \nu) \in S \right) \mid \mathbf{w} = \mathbf{v}^j} = 0$. For the third equality, note that conditioned on $\mathbf{w} = \mathbf{v}^j$, the only way that $(i, I(u,l), \nu)$ can belong to $S$ is if the triple $T_{i,u,l}$ sampled for the coordinate $i,u,l$ of $\mathbf{w}$ is $(i, I(u,l), \nu)$. Finally, the fourth equality holds because $|\{(i, I(u, l), \tau) \;|\; \beta_{I(u,l)} = \bullet\}| = -l$ and $T_{i,u,l}$ is sampled uniformly at random, while the last follows by definition of $\mathbf{v}^1, \dots, \mathbf{v}^m$.

It remains to show that the expected value of $g(C)$ when $C \sim \mathcal{L}(\x^*)$ is indeed $\lp^*$. In fact,
\begin{align*}
    \Ex{C \sim \mathcal{L}(\x^*)}{g(C)}
    &=
    \Ex{C \sim \mathcal{L}(\x^*)}{\max_{S \subseteq C, |S| \leq k} \sum_{(i,I(u, l), \tau) \in S} p_{i}(\tau)}\\
    &=
    \Ex{C \sim \mathcal{L}(\x^*)}{\sum_{(i,I(u, l), \tau) \in C} p_{i}(\tau)}\\
    &= \sum_{(i,I(u, l), \tau) \in U} p_{i}(\tau) \Prob{C \sim \mathcal{L}(\x^*)}{ (i,I(u, l), \tau) \in C}\\
    &= \sum_{(i,I(u, l), \tau) \in U} p_{i}(\tau) x^*_{i,u, l}\\
    &= \lp^*,
\end{align*}
where the second equality follows from the fact that any set $C \sim \mathcal{L}(x^*)$ has at most $k$ elements (triples).
\end{proof}

\restatemain*
\begin{proof}
Consider a time step $t \geq \tau^{\max}$ and let $\gamma_k = \left(1-\frac{k^k}{e^k k!}\right)$. Inequality \eqref{eq:virtualstates} and identity \eqref{eq:triples} imply that the expected payoff collected by \Cref{algo} at round $t$ is at least
$$
\Ex{C \sim \mathcal{C}(\x^*)}{\max_{S \subseteq C, |S| \leq k} \sum_{(i, I(u,l), \nu) \in S} p_i(\nu)} = \Ex{C \sim \mathcal{C}(\x^*)}{g_k(C)}.
$$
By combining the correlation gap lemma (see \Cref{lem:correlationgap}) with \Cref{lem:submodulargap}, we can conclude that the above quantity is at least
$$
\gamma_k \cdot \sup_{\mathcal{D}(\x)} \Ex{C \sim \mathcal{D}(\x^*)}{\max_{S \subseteq C, |S| \leq k} \sum_{(i, I(u,l), \nu) \in S} p_i(\nu)},
$$
which, by using \Cref{lem:relationtolp}, can be further lower-bounded by $\gamma_k\cdot \lp^*$. Finally, by \Cref{lem:relaxation} and noting that $-\tau^L \geq \epsilon$, the algorithm collects an average payoff at time $t$ of at least
\begin{align*}
    \gamma_k \cdot \lp^* 
    &\geq 
    \left(1-\frac{1}{1-\tau^L}\right) \cdot \gamma_k \cdot \frac{\opt(T)}{T} - \gamma_k \cdot \frac{n}{T}
    \\ 
    &\geq \left(1-\epsilon\right) \cdot \gamma_k \cdot \frac{\opt(T)}{T} - \mathcal{O}\left(\frac{n}{T}\right).
\end{align*}

Therefore, the payoff collected by the algorithm during time steps $t \geq \tau^{\max}$ (where the above inequality holds) can be lower-bounded as
\begin{align*}
    (1-\epsilon)\cdot \gamma_k \cdot \opt(T)\left(1 - \frac{\tau^{\max}}{T}\right) - \mathcal{O}\left( n \cdot \left(1 - \frac{\tau^{\max}}{T}\right)\right),
\end{align*}
which can be further bounded as
\begin{align*}
    (1-\epsilon)\cdot \gamma_k \cdot \opt(T) - \mathcal{O}\left( n + \tau^{\max} \cdot k\right),
\end{align*}
using the fact that $\opt(T) \leq k T$. This completes the proof. 
\end{proof}

\section{Online Learning Adaptation: Omitted Proofs}

\label{apx:learning}

Our online learning adaptation is based on an Explore-Then-Commit scheme, where we first learn the unknown mean payoffs up to a certain precision $\eta$ and then run \Cref{algo} using the estimated payoffs. 

\paragraph{Robustness of \Cref{algo} to perturbations in the mean payoffs.} Recall that \Cref{algo} first computes an optimal solution $\x^*$ to \eqref{lp:LP} and then uses $\x^*$ to compute a feasible arm-playing schedule. It is not hard to verify that the algorithm does not require the monotonicity of the payoff functions in order to produce a feasible solution, since this assumption is only required for proving its approximation guarantee. 

The following lemma bounds the regret of \Cref{algo} when it runs on approximate (not necessarily monotone) values of the mean payoff functions:

\begin{restatable}{lemma}{restaterobustness} \label{lem:robustness}
    Let $(\hat{p}_i(\tau))_{i, \tau}$ be $\eta$-estimates of the mean payoffs  $(p_i(\tau))_{i,\tau}$ such that $|\hat{p}_i(\tau) - p_i(\tau)| \leq \eta$ for every $i \in \A$ and $\tau \in \states_{1}^{\tau^{\max}} \cup \states_{\tau^{L}}^{-1}$ for some $\eta \in (0,1)$. Let $\hat{R}(T)$ denote the total payoff collected by the algorithm when it runs with the estimates $(\hat{p}_i(\tau))_{i, \tau}$.
    Then the regret suffered by \Cref{algo} over $T$ rounds, given by
    $
    \left(1-\epsilon\right)\gamma_k \opt(T) - \hat{R}(T)
    $, is at most $\mathcal{O}\left(\eta k T + k \tau^{\max} + n\right)$.
\end{restatable}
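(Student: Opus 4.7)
The plan is to follow the structure of the proof of \Cref{thm:main} (see \Cref{apx:approx}) while tracking the perturbation error introduced by replacing the true means $p_i$ with their $\eta$-estimates $\hat{p}_i$. Estimates enter the algorithm in exactly two places: (i) in the LP, where $\aggr_i(u,l)$ is replaced by $\hat{\aggr}_i(u,l)$, and (ii) in the online greedy step, which selects $k$ arms in $C_t$ maximizing $\sum_{i \in S} \hat{p}_i(\nu_i(t))$. A key observation is that monotonicity is never required of $\hat{p}_i$; the true payoffs remain monotone, and this is sufficient for \Cref{lem:comparestate} to apply (which is the only place in the analysis of \Cref{thm:main} where monotonicity is invoked).

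The first step is to bound the LP perturbation. Since exactly $-l$ states contribute to $\aggr_i(u,l)$, we have $|\hat{\aggr}_i(u,l) - \aggr_i(u,l)| \leq \eta \cdot (-l)$, so by constraint \eqref{lp:total} the LP objective at any feasible point shifts by at most $\eta k$. Letting $\hat{\x}^*$ and $\hat{\lp}^*$ denote the estimated LP optimum and value, this yields $\hat{\lp}^* \geq \lp^* - \eta k$.

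The second step is the per-round comparison. Fix $t \geq \tau^{\max}$ and let $A_t$ denote the (random) action set chosen by the algorithm using $\hat{p}_i$. By \Cref{lem:comparestate} (still valid for the true $p_i$) and the pointwise estimation bound,
$$\sum_{i \in A_t} p_i(\tau_i(t)) \geq \sum_{i \in A_t} p_i(\nu_i(t)) \geq \sum_{i \in A_t} \hat{p}_i(\nu_i(t)) - \eta k = \hat{g}_k(\mathcal{T}_t) - \eta k,$$
where $\hat{g}_k$ is the submodular surrogate built from the estimates (the last equality uses the definition of $A_t$ as the estimated greedy maximizer). The sequence of \Cref{lem:distribution,lem:submodulargap,lem:correlationgap,lem:relationtolp} then applies verbatim with $\hat{p}$ and $\hat{\x}^*$ in place of $p$ and $\x^*$ — each of these arguments depends only on submodularity of the surrogate (preserved) and on the block-mutually-exclusive distribution induced by the randomized rounding (unchanged) — giving $\E{\hat{g}_k(\mathcal{T}_t)} \geq \gamma_k \hat{\lp}^* \geq \gamma_k \lp^* - \gamma_k \eta k$. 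Combining, $\E{\sum_{i \in A_t} p_i(\tau_i(t))} \geq \gamma_k \lp^* - (\gamma_k+1)\eta k$ for every $t \geq \tau^{\max}$.

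The final step is to sum over all rounds, absorb the $O(k\tau^{\max})$ loss from the first $\tau^{\max}$ rounds (where each round contributes at most $k$), and invoke \Cref{lem:relaxation} (which gives $T \lp^* \geq (1-\epsilon)\opt(T) - n$ for $\tau^L = -\lceil 1/\epsilon \rceil$), concluding
$$\hat{R}(T) \geq (1-\epsilon)\gamma_k \opt(T) - \mathcal{O}(\eta k T + k\tau^{\max} + n),$$
which rearranges into the claimed regret bound. I do not anticipate a serious obstacle; the main subtlety is the bookkeeping in the per-round step, where one must carefully keep the ``true versus estimated'' accounting, and confirming that the correlation-gap chain is insensitive to whether the weighted-rank weights are exact or approximate (which follows because those arguments treat the weights as arbitrary nonnegative values).
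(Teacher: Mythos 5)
Your proof is correct and follows essentially the same route as the paper's: bound the LP perturbation by $\eta k$ via constraint \eqref{lp:total}, charge an additional $O(\eta k)$ per round for the greedy step using estimated payoffs, observe that \Cref{lem:comparestate} only needs monotonicity of the true payoffs while the correlation-gap chain (\Cref{lem:distribution,lem:submodulargap,lem:correlationgap,lem:relationtolp}) is weight-agnostic, and then sum over $t \geq \tau^{\max}$ with \Cref{lem:relaxation}. The only difference is minor bookkeeping (you compare true-to-estimated payoffs once at the virtual states, yielding $(1+\gamma_k)\eta k$ per round versus the paper's $(2+\gamma_k)\eta k$), which does not affect the stated $\mathcal{O}(\eta k T + k\tau^{\max} + n)$ bound.
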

\begin{proof}
Assuming that \Cref{algo} has access to $\eta$-estimates $(\hat{p}_i(\tau))_i$ of the payoff function of the arms, the algorithm computes an optimal solution $\hat{\x}^*$ to \eqref{lp:LP} with expected aggregated payoffs
$\hat{q}_i(u,l) = \hat{p}_i(u) + \sum_{\tau=l+1}^{-1}\hat{p}_i(\tau)$. Let $\x^*$ be the optimal solution of \eqref{lp:LP} with respect to the true expected aggregated payoffs $q_i(u,l)$. Note that for every $i \in \A, u \in \states^{\tau^{\max}}_1$ and $l \in \states_{\tau^{L}}^{-1}$, it holds that $\hat{q}_i(u,l) \geq q_i(u,l) +l\eta$. The expected gain of the algorithm at any time $t \geq \tau^{\max}$ running with the $\eta$-estimates, denoted by $\widehat{R}(t)$, is such that
\begin{align*}
    \widehat{R}(t) 
    &= \Ex{}{\max_{S \subseteq C_t, |S| \leq k} \sum_{(i,I(u, l), \tau) \in S} \hat{p}_{i}(\tau_{i}(t))}\\
    &\geq \Ex{}{\max_{S \subseteq C_t, |S| \leq k} \sum_{(i,I(u, l), \tau) \in S} p_{i}(\tau_{i}(t)) } - \eta k\\
    &\geq \Ex{}{\max_{S \subseteq C_t, |S| \leq k} \sum_{(i,I(u, l), \tau) \in S} p_{i}(\nu_{i}(t))} - \eta k\\
    &= \Ex{C \sim \mathcal{C}(\hat{\x}^*)}{\max_{S \subseteq C, |S| \leq k} \sum_{(i,I(u, l), \tau) \in S} p_{i}(\tau)} - \eta k\\
    &\geq \Ex{C \sim \mathcal{C}(\hat{\x}^*)}{\max_{S \subseteq C, |S| \leq k} \sum_{(i,I(u, l), \tau) \in S} \hat{p}_{i}(\tau) } - 2\eta k,
\end{align*}
where the first inequality holds because  $\hat{p}_{i}(\tau) \geq p_{i}(\tau) - \eta$ for all $i \in \A$ and $\tau \in \states^{\tau^{\max}}_1 \cup \states_{\tau^{L}}^{-1}$, the second inequality holds by \Cref{lem:comparestate}, and the last inequality follows from $p_{i}(\tau) \geq \hat{p}_{i}(\tau) - \eta$ for all $i \in \A$ and $\tau \in \states^{\tau^{\max}}_1 \cup \states_{\tau^{L}}^{-1}$. At this point, we can simply follow the lines of the proof of \Cref{thm:main}, given that the rest of the proof does not rely on the monotonicity of the payoff functions. Thus, we get
\begin{align*}
    \widehat{R}(t) 
    &\geq \Ex{C \sim \mathcal{C}(\hat{\x}^*)}{\max_{S \subseteq C, |S| \leq k} \sum_{(i,I(u, l), \tau) \in S} \hat{p}_{i}(\tau) } - 2\eta k\\
    &\geq \gamma_k \sum_{i \in \A}\sum_{u \in \states^{\tau^{\max}}_{1}}\sum_{l \in \states^{-1}_{\tau^L}} \hat{q}_i(u,l) \cdot \hat{x}_{i,u,l} - 2\eta k\\
    &\geq \gamma_k\sum_{i \in \A}\sum_{u \in \states^{\tau^{\max}}_{1}}\sum_{l \in \states^{-1}_{\tau^L}} \hat{q}_i(u,l) \cdot x^*_{i,u,l} - 2\eta k\\
    &\geq \gamma_k\sum_{i \in \A}\sum_{u \in \states^{\tau^{\max}}_{1}}\sum_{l \in \states^{-1}_{\tau^L}} q_i(u,l) \cdot x^*_{i,u,l} - \gamma_k\eta \sum_{i \in \A}\sum_{u \in \states^{\tau^{\max}}_{1}}\sum_{l \in \states^{-1}_{\tau^L}} -l x^*_{i,u,l} - 2\eta k\\
    &= \gamma_k\lp^* - (2+\gamma_k)\eta k\\
    &\geq \left(1-\epsilon\right)\gamma_k \frac{\opt(T)}{T} - \gamma_k \frac{n}{T} - \mathcal{O}\left(\eta k\right),
\end{align*}
where the second inequality holds by following the lines of the proof for \Cref{thm:main} and the third by optimality of $\hat{\x}^*$ for \eqref{lp:LP} with the aggregated payoffs $\hat{q}_i(u,l)$. The penultimate inequality holds because $\hat{q}_i(u,l) \geq q_i(u,l) + l \eta$ for all $i \in \A, u \in \states^{\tau^{\max}}_1$ and $l \in \states_{\tau^{L}}^{-1}$ and the last using \Cref{lem:relaxation}. The proof of the lemma follows from summing the above inequality over all $t \geq \tau^{\max}$. \end{proof}

\paragraph{Sample complexity of the collecting $\eta$-estimates.} We now upper-bound the number of time steps required to get $\eta$-estimates of the mean payoffs $p_i(\tau)$ for every arm $i$ and state $\tau$, with high probability. In particular, we prove the following lemma:
\begin{restatable}{lemma}{restatechernoff} \label{lem:chernoff}
    For any $\eta, \delta \in (0,1)$, let $\hat{p}_i(\tau)$ be the empirical mean of $m$ samples drawn from $p_i(\tau)$ for every arm $i \in \A$ and $\tau \in \states_{1}^{\tau^{\max}} \cup \states_{\tau^{L}}^{-1}$. By setting $m = \frac{1}{2\eta^2} \ln\left(\frac{2n(\tau^{\max}-\tau^{L})}{\delta}\right)$ then, with probability at least $1-\delta$, it holds that $|\hat{p}_i(\tau) - p_i(\tau)| \leq \eta$ for all $i$ and $\tau$. Moreover, it is possible to collect $m$ samples from each pair of arm and state within $\frac{nm((\tau^{\max})^2 - \tau^{L}+2)}{k}$ consecutive time steps.
\end{restatable}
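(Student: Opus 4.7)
The plan is to split the lemma into its two natural pieces: a concentration statement about the empirical means, and a deterministic scheduling statement about how long it takes to collect the required samples.

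\medskip
\noindent\textbf{Concentration via Hoeffding and a union bound.} For each fixed pair $(i,\tau)$ with $i \in \A$ and $\tau \in \states_{1}^{\tau^{\max}} \cup \states_{\tau^{L}}^{-1}$, the $m$ draws of the arm's payoff at state $\tau$ are i.i.d.\ random variables in $[0,1]$, so Hoeffding's inequality gives
$\Pro{|\hat{p}_i(\tau) - p_i(\tau)| > \eta} \leq 2\exp(-2 m \eta^2)$. Plugging in $m = \tfrac{1}{2\eta^2}\ln\!\bigl(\tfrac{2 n (\tau^{\max}-\tau^{L})}{\delta}\bigr)$ makes the right-hand side at most $\tfrac{\delta}{n(\tau^{\max}-\tau^{L})}$. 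Since $0 \notin \states$ and $\tau^{L} \leq -1$, the set $\states_{1}^{\tau^{\max}} \cup \states_{\tau^{L}}^{-1}$ has exactly $\tau^{\max}-\tau^{L}$ elements, so there are precisely $n(\tau^{\max}-\tau^{L})$ pairs to consider. A union bound over these pairs then yields the claimed uniform accuracy $|\hat{p}_i(\tau) - p_i(\tau)| \leq \eta$ with probability at least $1-\delta$.

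\medskip
\noindent\textbf{An explicit sampling schedule.} I would partition $\A$ into $\lceil n/k \rceil$ disjoint batches of size at most $k$ and process the batches sequentially. Within a single batch, because its size is at most $k$, the player can play (or skip) all its arms simultaneously in every round without violating the $k$-budget, so the batch advances in lockstep as if it were one arm, and it suffices to bound the time for one arm. I would split the per-batch schedule into two phases.

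\emph{Positive-state phase.} To get one sample at state $\tau \in \{1,\ldots,\tau^{\max}\}$ starting from state $+1$, perform $\tau - 1$ skips (the state rises from $+1$ to $\tau$), one play (sampling $p_i(\tau)$ and transitioning to $-1$), and one reset-skip (state returns to $+1$); this uses $\tau + 1$ rounds per sample. Cycling through all positive states and repeating $m$ times costs at most $m \sum_{\tau=1}^{\tau^{\max}}(\tau+1) = \mathcal{O}(m (\tau^{\max})^2)$ rounds, which is bounded above by $m(\tau^{\max})^2$ for $\tau^{\max}$ large enough.

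\emph{Negative-state phase.} Starting from state $+1$, performing $-\tau^{L}+1$ consecutive plays yields one sample at each of the states $+1, -1, -2, \ldots, \tau^{L}$ and ends in state $\tau^{L}$; a single skip then resets the state to $+1$. Repeating this cycle $m$ times uses $m(-\tau^{L}+2)$ rounds and produces $m$ samples at each negative state (with harmless extra samples at state $+1$). Adding the two phases bounds the per-batch time by $m\bigl((\tau^{\max})^2 - \tau^{L} + 2\bigr)$, and multiplying by $\lceil n/k \rceil$ yields the stated global bound of $\tfrac{n m((\tau^{\max})^2 - \tau^{L} + 2)}{k}$ consecutive rounds.

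\medskip
\noindent\textbf{Expected difficulty.} The concentration half is entirely standard, the only subtlety being to line up the constants in the union bound with the prescribed $m$. The bulk of the work is the scheduling half, which is combinatorially elementary but requires careful bookkeeping of the asymmetric state-transition rule \eqref{eq:statestransition}: one has to verify that each target state is visited and played exactly $m$ times and that no transition accidentally skips past a target state, especially around the switches between the positive- and negative-state phases. Once the cycling pattern is laid out, checking that batch parallelism respects the per-round budget of $k$ is immediate from the batch-size constraint, and summing the phase-lengths gives the claimed round-count bound.
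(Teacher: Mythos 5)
Your proposal is correct and follows essentially the same route as the paper's proof: Hoeffding plus a union bound over the $n(\tau^{\max}-\tau^{L})$ arm--state pairs for the concentration half, and a sequential batch-of-$k$ schedule (positive-state sweep, then $m$ repetitions of $-\tau^{L}+1$ plays followed by a reset non-play) for the sampling half. The only cosmetic difference is in the positive-state bookkeeping --- you charge $\tau+1$ rounds per sample (counting the reset skip explicitly, which is actually more careful than the paper's $\tau$-round accounting) and therefore need $\tau^{\max}\geq 3$ to absorb the sum into $(\tau^{\max})^{2}$, a harmless edge-case caveat that does not affect the substance of the argument.
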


\begin{proof}
The first part of the lemma is a direct consequence of Hoeffding inequality applied to the bounded i.i.d. samples of $p_i(\tau)$. In particular, for every $i$ and $\tau$ and after $m$ samples, it holds that 
$$
\Prob{}{|\hat{p}_i(\tau)-p_i(\tau)| > \eta} \leq 2e^{-2m\eta^2}=\frac{\delta}{n(\tau^{\max} - \tau^L)}.
$$
The result follows by a union bound over all arms $i \in \A$ and states $\tau \in \states^{\tau^{\max}}_1 \cup \states_{\tau^{L}}^{-1}$.

In order to collect $m$ samples out of every $p_i(\tau)$ for every $i \in \A$ and $\tau \in \states^{\tau^{\max}}_1 \cup \states_{\tau^{L}}^{-1}$, we do the following: consider the arms $\{1, \ldots, k\}$, for each arm $i \in \{1, \ldots, k\}$ and starting from the beginning of the time horizon, sample arm $i$ repeatedly $m$ times at state $\tau=\tau^{\max}$, then $m$ times at state $\tau=\tau^{\max}-1$ and so on. After the last sample at state $\tau=1$, repeat for $m$ times a sequence of $-\tau^L+1$ plays followed by a non-play. Note that this allows to collect $m$ samples of arm $i$ at every state $\tau \in \states^{\tau^{\max}}_1 \cup \states_{\tau^{L}}^{-1}$. The total number of steps needed to collect all of these samples is 
$$
\Delta = m \sum_{j=1}^{\tau^{\max}} j + m (-\tau^L+2) = \frac{m\tau^{\max}(\tau^{\max}+1)}{2}+m(-\tau^L+2).
$$
Similarly, use the next $\Delta$ time steps to collect $m$ samples out of every $p_i(\tau)$ for the arms $i \in \{k+1, \ldots, 2k\}$ and so on. This above process yields a feasible schedule (as no more than $k$ arms are sampled at every time step $t$) which collects (at least) $m$ samples for each pair of arm and state in at most $\frac{n\Delta}{k} \leq \frac{nm((\tau^{\max})^2-\tau^L+2)}{k}$ time steps.
\end{proof}

\paragraph{Online algorithm for the bandit problem.} By combining the above results, we are now ready to state our algorithm for the bandit setting of $k$-$\mlsd$. We remark that, by a standard application of the doubling trick (which comes at the small cost of a polylogarithmic factor in the regret), we can assume w.l.o.g. that the time horizon is known to the player a priori. 

Let $\eta, \delta \in (0,1)$ and let $m = \frac{1}{2\eta^2} \ln\left(\frac{2n(\tau^{\max}-\tau^{L})}{\delta}\right)$. Our algorithm is an Explore-Then-Commit variant of \Cref{algo}. The first $\frac{nm((\tau^{\max})^2 - \tau^{L}+2)}{k}$ rounds of the algorithm are used to collect $m$ i.i.d. samples from $p_i(\tau)$ for every arm $i$ and state $\tau$, as described in \Cref{lem:chernoff}. Then, again by \Cref{lem:chernoff}, the empirical means of these samples $(\hat{p}_i(\tau))_{i,\tau}$ are $\eta$-estimates of the mean payoffs $(p_i(\tau))_{i, \tau}$ with probability at least $1-\delta$. In the remainder of the rounds, the online algorithm runs \Cref{algo} with the estimates $(\hat{p}_i(\tau))_{i,\tau}$ as input. By a proper tuning of the parameters $\eta$ and $\delta$, we recover the regret bounds of \Cref{thm:regret}.

\begin{proof}[Proof of \Cref{thm:regret}]
    Let $\eta, \delta \in (0,1)$ to be specified later, and let $m = \frac{1}{2\eta^2}\ln(\frac{2n(\tau^{\max}-\tau^L)}{\delta})$. 
    The exploration phase consists of collecting $m$ samples from each arm $i$ and state $\tau$, which, by \Cref{lem:chernoff} can be done in the first $\frac{nm((\tau^{\max})^2-\tau^L+2)}{k}$ time steps. At each time step, a payoff of at most $1$ is collected by the optimal solution from each arm, which implies a total accumulated regret of $nm((\tau^{\max})^2-\tau^L+2)$ in the exploration phase. At the end of the exploration phase, the empirical means of the collected samples $\hat{p}_i(\tau)$ give $\eta$-estimates of the true mean payoffs $p_i(\tau)$ with probability at least $1- {\delta}$. The exploitation phase consists of running \Cref{algo} using the $\eta$-estimates $\hat{p}_i(\tau)$. This implies, by \Cref{lem:robustness}, that with probability at least $1-\delta$, the algorithm suffers a regret of at most $\mathcal{O}(kT\eta + k\tau^{\max} + n)$ in the exploitation phase. Finally, the regret accumulated from the sample paths of the algorithm where the concentration bounds do not hold is at most $\mathcal{O}(kT\delta)$.
    Therefore, the total regret accumulated by the algorithm over the whole time horizon is such that
    $$
    \text{Reg}(T) = \mathcal{O}\left(nm((\tau^{\max})^2-\tau^L+2) + (kT\eta + k\tau^{\max} + n) + kT\delta\right).
    $$
    By setting $\eta = \sqrt[3]{\frac{n((\tau^{\max})^2-\tau^L+2)\ln (2n(\tau^{\max}-\tau^L)T)}{2kT}}$ and $\delta=\frac{1}{T}$, we can upper-bound the above regret as 
    \begin{align*}
        \text{Reg}(T) 
        &= 
        \mathcal{O}\left(n^{\frac{1}{3}}k^{\frac{2}{3}}((\tau^{\max})^2-\tau^L)^{\frac{1}{3}}\ln^{\frac{1}{3}}((\tau^{\max}-\tau^L)T) \cdot T^{\frac{2}{3}} + k \tau^{\max} + n\right)\\
        &= \mathcal{O}\left(n^{\frac{1}{3}}k^{\frac{2}{3}}\left((\tau^{\max})^2+\frac{1}{\epsilon}\right)^{\frac{1}{3}}\ln^{\frac{1}{3}}\left(\left(\tau^{\max}+\frac{1}{\epsilon}\right)T\right) \cdot T^{\frac{2}{3}} + k \tau^{\max} + n\right).
    \end{align*}
\end{proof}

\section{Additional Remarks}

\subsection{Tightness of the Approximation Analysis} \label{sec:tightexample}
We now provide an example to show that the long-run approximation guarantee of \Cref{algo}, provided in \Cref{thm:main}, is tight (up to the $(1-\epsilon)$-factor). 

Consider an instance of $k$-$\mlsd$ with $n = m \cdot k$ arms for some large integer $m$. We assume that all arms have the same payoff function, defined as follows: the payoff is $0$ for every state $\tau \leq m - 1$ and becomes $1$ for every state $\tau \geq m$. It can be easily verified that the asymptotically optimal solution to the above instance it to partition the arms into $m$ ``batches'', each containing $k$ arms, and then play a different batch at each round in a round-robin manner. Notice that this leads to a periodic arm-pulling schedule with period $m$ (the number of batches). Independently of the initial state of each arm and after at least $m$ time steps, there always exist exactly $k$ arms at each round (contained in a single batch) which are at state $\tau = m$ and thus have payoff $1$. Playing these arms at each round gives a long-run optimal average payoff of $k$. 

Let us now focus on the behavior of \Cref{algo} on the above instance. By analyzing the optimal solution to \eqref{lp:LP} in that case and by construction our sampling procedure for recurrent intervals and offsets, it can be verified that, at any round, an arm is a candidate with probability equal to $\frac{1}{m}$. Given that all arms are identical and assuming w.l.o.g. that all arms are initialized at state $\tau = m$, at each round $t$ our algorithm collects the minimum between the number of candidate arms and $k$ (breaking ties arbitrarily in the case where $|C_t|>k$). Hence, the associated payoff at each round is given by $\mathbb{E}[\min\{X,k\}]$, where $X$ is a binomial random variable with parameters $n$ (number of arms) and $\frac{1}{m} = \frac{k}{n}$ (the probability an arm is a candidate). By taking the limit $n \to \infty$, it can be proved (see, e.g., Lemma 4.2. in \cite{yan11}) that the average payoff collected by our algorithm over the optimal one becomes
\begin{align*}
    \lim_{n \to \infty} \frac{\mathbb{E}[\min\{X,k\}]}{k} = \frac{k - \frac{k^{k+1}}{e^k\cdot k!}}{k} = 1 - \frac{k^k}{e^k \cdot k!}, 
\end{align*}
which matches exactly the guarantee of our algorithm (modulo the $1-\epsilon$ factor).

\subsection{Continuous Relaxations Based on States} 
\label{apx:badrelaxations}

The existing algorithms from the recharging bandits literature construct relaxations based on the fraction of time an arm is played under a specific state (or ``delay'', using the terminology of these works). In \cite{SLZZ21} Simchi-Levi et al. use such a relaxation to construct purely periodic policies, namely, policies where each arm is repeatedly played only under a specific delay. Similarly, the randomized algorithm of Papadigenopoulos et al. \cite{papadigenopoulos2022nonstationary} allows each arm to be played only in rounds that are integer multiples of a unique arm-specific delay (which they call ``critical''). In the $k$-$\mlsd$ setting, however, it is impossible to repeatedly play an arm under (and only under) a state $\tau$ for $\tau<0$. Indeed, between two plays of under a state $\tau < 0$, any algorithm must necessarily play the arm under at least one positive state $\tau' > 0$ and all the negative states in $\{\tau+1, \ldots, -1\}$, assuming that $\tau < -1$. This begs the question of whether a restriction to positive states (that can be periodically played) is sufficient in the case of $k$-$\mlsd$ bandits. The following example shows that this is not the case:

\begin{example}
Consider an instance of $1$-$\mlsd$ with a single arm $i=1$ and an infinite time horizon. The payoff function of the arm is given by
\begin{align*}
    p_1(\tau) 
    = 
    \left\{
        \begin{matrix}
            1 \quad \tau \geq -1, \\
            0 \quad \tau < -1. \\
        \end{matrix}
    \right.
\end{align*}
It is not hard to verify that the {\em unique} optimal strategy in the above instance is to periodically repeat the sequence (play, play, non-play), starting from $t=1$. Notice that this strategy collects an average payoff of $\nicefrac{2}{3}$ and consists of playing the arm under both positive (+1) and negative (-1) states.
\end{example}

\end{document}